\documentclass{article}
\PassOptionsToPackage{table,dvipsnames}{xcolor}
\PassOptionsToPackage{numbers, compress}{natbib}
\usepackage[preprint]{neurips_2026}

\usepackage[hidelinks,colorlinks=true,linkcolor=blue,citecolor=blue]{hyperref}
\usepackage{url}
\usepackage{float}
\usepackage{graphicx}
\usepackage{subcaption}
\usepackage{placeins}
\usepackage{wrapfig}
\usepackage{enumitem}
\usepackage{diagbox}
\usepackage{pgf}
\usepackage{microtype}
\usepackage[multiple]{footmisc}
\usepackage[textsize=scriptsize]{todonotes}
\usepackage{mathtools}
\usepackage{booktabs}
\usepackage{tabularx}
\usepackage{tikz}
\usetikzlibrary{matrix,fit,backgrounds,calc}
\usepackage{multirow}
\usepackage{physics}
\usepackage{array}
\usepackage{arydshln}
\usepackage{amsthm}
\usepackage{algorithm}
\usepackage{algpseudocode}

\usepackage{amsmath,amsfonts,bm,amssymb}

\def\1{\bm{1}}

\DeclareMathAlphabet{\mathsfit}{\encodingdefault}{\sfdefault}{m}{sl}
\SetMathAlphabet{\mathsfit}{bold}{\encodingdefault}{\sfdefault}{bx}{n}

\newtheorem{proposition}{Proposition}
\newtheorem{theorem}{Theorem}
\newtheorem{assumption}{Assumption}
\newtheorem{lemma}{Lemma}
\newtheorem{definition}{Definition}

\newcommand{\startpara}[1]{{\noindent{\bf #1.}}} 

\definecolor{ao}{rgb}{0.0, 0.5, 0.0}
\definecolor{bittersweet}{rgb}{0.88235294, 0.67843137, 0.00392157}
\newcolumntype{C}[1]{>{\centering\arraybackslash}p{#1}}

\newenvironment{tightitems}{
  \vspace{-8pt}
  \begin{itemize}[leftmargin=10pt, labelsep=2pt, itemsep=-1pt]
}{
  \end{itemize}
  \vspace{-8pt}
}

\usepackage{natbib}
\title{Latent Q-Barrier Shielding for Safe In-Context Reinforcement Learning}

\begin{document}
\author{
  Minjae Kwon$^{*\dagger}$ \\
  University of Virginia \\
  \texttt{mjkwon@virginia.edu} \\
  \And
  Amir Moeini$^{*\dagger}$ \\
  University of Virginia \\
  \texttt{amoeini@virginia.edu} \\
  \And
  Shangtong Zhang \\
  University of Virginia \\
  \texttt{shangtong@virginia.edu} \\
  \AND
  Lu Feng \\
  University of Virginia \\
  \texttt{lu.feng@virginia.edu}
}

\footnotetext[1]{$^{*}$ Equal contribution.}
\footnotetext[2]{$^{\dagger}$ Corresponding authors.}
\maketitle
\begin{abstract}
Safe in-context reinforcement learning (ICRL) adapts online from interaction history without test-time parameter updates while controlling episode cost under a safety budget. Under out-of-distribution (OOD) deployment shifts, pretraining-only safe ICRL can give poor reward-safety tradeoffs because the remaining budget affects behavior only through frozen policy conditioning, not an explicit action-level check against predicted future cost. 
We propose a latent Q-Barrier shield that learns a context representation, latent dynamics, and an ensemble cost critic before deployment. 
Without parameter updates, the shield infers context from history and filters or softly reweights candidate actions using the remaining budget and predicted future cost. We prove a conditional, error-decomposed barrier-margin result: a Q-Barrier-satisfying action leaves the next latent-budget state with an approximately budget-safe continuation under the learned critic, up to Bellman and latent-prediction errors. 
Across five safe ICRL benchmarks, the shield improves deployment-time reward-safety tradeoffs over a strong safe-ICRL baseline: after a short context window, it achieves higher return in four of five benchmarks while matching or lowering average episode cost in all five.
\end{abstract}    
\section{Introduction}\label{sec:introduction}
Reinforcement learning (RL)~\citep{sutton2018reinforcement} agents deployed in the real world must manage safety constraints such as resource limits, physical damage, or constraint violations during deployment. Constrained Markov decision processes (CMDPs)~\citep{altman2021constrained} formalize this requirement by assigning each transition both a reward and a cost and constraining expected cumulative cost under a user-specified budget. Most safe RL methods enforce this constraint during training, for example through Lagrangian penalties or constrained policy optimization~\citep{achiam2017constrained, ray2019benchmarking}.

In-context reinforcement learning (ICRL) provides a different adaptation mechanism: a pretrained agent adapts to new tasks at test time without parameter updates by conditioning its policy on an expanding interaction history~\citep{duan2016rl, xu2022prompting, laskin2023incontext, sinii2024incontext, polubarov2025vintix, liu2026scalable}. Recent work extends this idea to safe ICRL. In particular, \citet{moeini2026scared} formulate safe ICRL as a constrained problem across test episodes and propose SCARED, a Lagrangian pretraining method whose fixed-point analysis characterizes budget-respecting policies under its assumptions.

However, existing safe ICRL methods, such as SCARED, mainly encode safety through pretraining
objectives or cost-conditioning, rather than through an explicit action-level
check against the remaining budget.
A frozen ICRL policy may still assign probability to actions whose
predicted future cost exceeds the remaining budget, especially in generalization
settings where the task and safety structure must be inferred from context. We
add a budget-aware shielding layer that evaluates candidate actions against the
remaining budget using a latent cost barrier and a pessimistic cost-to-go critic.
The shield then filters or softly reweights the policy distribution without
deployment-time parameter updates, turning learned safety preferences into an
explicit action-level check.

This motivates the central question of the paper: \textit{can a learned barrier improve a frozen safe-ICRL policy's reward-safety tradeoff by turning the remaining budget into an explicit action-selection constraint?} We answer
this question through a Q-Barrier shielding method, a margin-based analysis, and
empirical evaluation on deployment generalization tasks.
In summary, our contributions are:
\begin{tightitems}
\item \textbf{Barrier-based runtime shielding for safe ICRL deployment.}
We introduce a deployment-time shield that reweights a frozen policy using the remaining safety budget and a learned cost-to-go estimate. The shield uses latent representations, latent dynamics, and an ensemble cost critic learned during training, with all parameters fixed at deployment.

\item \textbf{Budget-safe continuation analysis.}
We prove a Q-Barrier margin bound showing that a shielded action preserves next-step latent-budget margin up to Bellman upper-bound and latent-prediction errors. A Q-Barrier-satisfying action thus admits an approximately budget-safe continuation under the learned critic.


\item \textbf{Improved in-context adaptation reward-safety tradeoffs.}
Q-Barrier improves early return and lowers cost after a short context window in
four of five benchmarks, with one environment showing a conservative
lower-cost/lower-return tradeoff. In budget sweeps, it achieves the highest
cumulative return in four of five environments and satisfies the budget across
most levels, 
with violations mostly occurring at near-zero budgets, where even small costs can exceed the budget.
\end{tightitems}
\section{Background}\label{sec:background}
\textbf{Markov Decision Processes.} We model the interaction with an environment as a finite-horizon Markov decision process (MDP \citep{puterman2014markov}).
An MDP is defined by a state space $\mathcal{S}$, an action space $\mathcal{A}$, a reward function $r : \mathcal{S} \times \mathcal{A} \to \mathbb{R}$, a transition kernel $p(\cdot \mid s,a)$, where for each state-action pair $(s,a)$, $p(\cdot \mid s,a)$ denotes a probability distribution over next states in $\mathcal{S}$, an initial state distribution $p_0$ over $\mathcal{S}$, and a horizon length $T$. We denote by $T_k$ the terminal timestep of episode $k$. 
An agent follows a stochastic policy $\pi$, modeled as a Markov kernel from
states to actions: for each $s\in\mathcal S$, $\pi(\cdot\mid s)$ is a
probability distribution over $\mathcal A$. 
When $\mathcal A$ is discrete, $\pi(a\mid s)$ denotes the probability mass of
action $a$. When $\mathcal A$ is continuous, $\pi(a\mid s)$ denotes the action
density, when it exists, which integrates to one.
After starting from
$S_0\sim p_0$, at time step $t$, the agent samples
$A_t\sim\pi(\cdot\mid S_t)$, observes reward
$R_{t+1}\doteq r(S_t,A_t)$, and transitions to
$S_{t+1}\sim p(\cdot\mid S_t,A_t)$.
We use $k$ to index episodes and $t$ to index time steps within an episode.
For any episode $\tau$, we define its return as $G(\tau) \doteq \sum_{t=1}^T R_t$.
The performance of a policy $\pi$ is measured by the expected total rewards $J(\pi) \doteq \mathbb{E}_{\tau \sim \pi}[G(\tau)]$.

\textbf{Constrained Markov Decision Processes.}
In addition to the reward function $r$, a constrained MDP (CMDP \citep{altman2021constrained}) involves a cost function $c : \mathcal{S} \times \mathcal{A} \to \mathbb{R}^{+}$ with an associated user-given budget $\delta$.
At each time step $t$, after taking an action $A_t$ at a state $S_t$, the agent receives a cost $C_{t+1} \doteq c(S_t, A_t)$.
An episode in a CMDP therefore becomes $\tau = (S_0, A_0, R_1, C_1, S_1, \dots, S_{T-1}, A_{T-1}, R_T, C_T).$ We denote the total cost of an episode $\tau$ by $G_c(\tau) \doteq \sum_{t=1}^T C_t$, which should, in expectation, remain below the budget $\delta$.
The return-to-go (RTG) is $G_t(\tau) \doteq \sum_{i=t+1}^T R_i$ and the cost-to-go (CTG) is $G_{c,t}(\tau) \doteq \sum_{i=t+1}^T C_i$. 
Given $\delta$, we define the remaining cost budget at time $t$ as $B_t \doteq \delta - \sum_{i=1}^{t} C_i, ~B_0=\delta.$

\textbf{In-Context Reinforcement Learning.}
ICRL~\citep{moeini2025survey} trains an agent across a distribution of tasks so that, at test time, it can adapt without parameter updates by conditioning on an expanding interaction history.
Let $\theta_*$ denote the pretrained parameters.
During deployment, $\theta_*$ is fixed and the policy is written as $\pi_{\theta_*}(\cdot \mid S_t^k, H_t^k),$ where $H_t^k$ is the context available at time $t$ in episode $k$.
A context consists of previous episodes $\{\tau_{j}\}_{j=1, \dots k-1}$ and the current episode prefix $\tau_k^t$, i.e., $H_t^k \doteq (\tau_1,\ldots,\tau_{k-1},\tau_k^{t}).$
In safe ICRL~\citep{moeini2026scared}, each task is a CMDP, so the context includes both rewards and costs: $\tau_k^{t}
\doteq (S_0^k,A_0^k,R_1^k,C_1^k,S_1^k,\ldots, S_{t-1}^k,A_{t-1}^k,R_t^k,C_t^k).$
We write the remaining episode budget as $B_t^k \doteq \delta - \sum_{i=1}^{t} C_i^k, ~B_0^k=\delta .$
Adaptation occurs as $\pi_{\theta_*}(\cdot \mid S_t^k,H_t^k)$ changes with the growing context, while the policy parameters remain fixed.

\subsection{Problem Formulation}\label{sec:problem}
We formulate safe ICRL deployment over $K$ test episodes. All parameters are learned before deployment and fixed at test time. The deployed shielded policy $\pi$ uses the current state $S_t^k$, context $H_t^k$, and remaining budget $B_t^k$ for action selection. Our objective is:
\begin{equation}
\label{eq:safe_icrl_objective}
\textstyle
\max_{\pi}\;
\sum_{k=1}^K \mathbb{E}_{\pi}[G(\tau_k)]
\quad
\text{s.t.}\quad
\frac{1}{K}\sum_{k=1}^K \mathbb{E}_{\pi}[G_c(\tau_k)] \le \delta.
\end{equation}
The shield changes actions online to improve reward-safety tradeoffs during deployment. 
Theorem~\ref{thm:barrier_margin_propagation} shows how the barrier margin propagates one step forward and when the next state still admits an approximately budget-safe continuation under the learned critic. Proposition~\ref{prop:episode_budget_bound} shows an episode-level budget-bound.

\section{Approach}\label{sec:approach}
We present the method in three parts. First, we define the latent state and learned safety model used for runtime safety estimation. Second, we introduce a Q-function barrier that uses the learned cost-to-go estimate and the remaining safety budget to filter actions online. Third, we describe how the latent representation, dynamics model, and cost critic are trained to support shielding at deployment.

\subsection{Latent State and Projection Spaces}\label{subsec:latent}
The shield uses two views of the same in-context state: a policy view for proposing actions and a world view for evaluating safety. Let $\mathcal{H}$ denote the context space and let $E_\phi:\mathcal{H}\times\mathcal{S}\to \mathcal{Z}\subseteq \mathbb{R}^{d_z}$ be the shared encoder. Given context $H_t^k$ and state $S_t^k$, the shared latent state is $Z_t^k \doteq E_\phi(H_t^k,S_t^k)\in\mathcal{Z}.$ From this shared latent, we define two projection heads with a common projected dimension $d_m$: $g_\omega^{\mathrm{world}}:\mathcal{Z}\to\mathcal{Z}^w\subseteq\mathbb{R}^{d_m}, ~ g_\psi^{\mathrm{policy}}:\mathcal{Z}\to\mathcal{Z}^p\subseteq\mathbb{R}^{d_m}.$
The corresponding projected latents are $Z_t^{w,k}\doteq g_\omega^{\mathrm{world}}(Z_t^k), ~ Z_t^{p,k}\doteq g_\psi^{\mathrm{policy}}(Z_t^k).$ The world-projected latent $Z_t^{w,k}$ is the safety and modeling space used by the latent dynamics model, cost critic, and runtime barrier. The policy-projected latent $Z_t^{p,k}$ is the policy-side representation used by the base policy to form its action distribution. The shield proposes actions through the policy view and evaluates their budget risk through the world view. 
On $\mathcal{Z}^w$, we learn a probabilistic latent transition model
$p_z(\cdot \mid Z_t^{w,k},A_t^k)$ and an ensemble of cost critics
$\{\hat Q_{C, i}\}_{i=1}^M$. The deployed shield uses the pessimistic
aggregate $\hat Q_C^+(Z,A)
\doteq
\max_{i\in\{1,\ldots,M\}}\hat Q_{C, i}(Z,A),$
which reduces the risk of underestimating future cost. 
We write
$f_z(Z_t^{w,k},A_t^k)
\doteq
\mathbb{E}_{p_z}[Z_{t+1}^{w,k}\mid Z_t^{w,k},A_t^k]$
for the predictive mean. The critic $\hat Q_C^+$ estimates conservative
cumulative cost-to-go, while $f_z$ is the deterministic prediction used in the
barrier construction and theoretical analysis. All modules are learned during
training and frozen at test time.

\subsection{Q-Function Barrier and Shield Policy}\label{subsec:barrier}
The shield maps a base policy into a budget-aware action distribution. During training, the base policy $\pi_\theta$ is updated. At each decision time $t$ in episode $k$, the shield evaluates a finite candidate action set
$\mathcal{A}_{t, k}^{\mathrm{cand}}\subseteq\mathcal{A}$.
In discrete-action settings, $\mathcal{A}_{t, k}^{\mathrm{cand}} = \mathcal{A}$; in continuous-action settings, it is the finite set of candidate actions sampled from $\pi_\theta$. At state $Z_t^{w,k}$ with remaining budget $B_t^k$, actions whose predicted cost-to-go exceeds the available budget are discouraged.
This motivates a learned barrier signal derived from the cost critic: the value is positive when the remaining budget suffices to cover the predicted cost-to-go and negative when the action is predicted to overspend the budget.
Because the barrier depends on the current latent state and learned cost-to-go estimate, it adapts online as context accumulates, without changing any parameters.

\begin{definition}[Q-Function Barrier]
\label{def:q_barrier}
For $Z \in \mathcal{Z}^w$, define the candidate-set cost value
$\hat V^{+, k}_{C, t}(Z)
\doteq
\min_{A \in \mathcal{A}_{t, k}^{\mathrm{cand}}} \hat Q^{+}_C(Z, A).$
The state- and state-action barriers are
\begin{equation}
b^k_{V, t}(Z_t^{w,k}, B_t^k)
\doteq
B_t^k - \hat V^{+, k}_{C, t}(Z_t^{w,k}),
\qquad
b^k_{Q, t}(Z_t^{w,k}, B_t^k, A_t^k)
\doteq
B_t^k - \hat Q^{+}_C(Z_t^{w,k}, A_t^k).
\label{eq:q_barrier}
\end{equation}
\end{definition}
The state barrier $b^k_{V, t}\ge 0$ means that at least one candidate action has predicted cost-to-go no larger than the remaining budget. The action barrier $b^k_{Q, t}$ evaluates whether a specific action is predicted to stay within the remaining budget under $\hat Q^+_C$. Because this margin is estimated by learned models, we use it as a soft risk signal rather than a hard certificate. The primary variant is a \emph{soft shield} that reweights $\pi_\theta$ over $\mathcal{A}_{t, k}^{\mathrm{cand}}$, preserving policy support while penalizing candidates whose predicted cost exceeds the remaining budget. 
For notational convention, we use $[x]_+ \doteq \max\{x,0\}$.
For each candidate action, define the soft-shield score as 
$$w_t(A)
\doteq
\rho_t(A)
\exp\!\left(
-\left[-b^k_{Q, t}(Z_t^{w,k},B_t^k,A)\right]_+
\right),
\text{~~where~~}$$
$$
\rho_t(A)
=
\begin{cases}
\pi_\theta(A\mid Z_t^{p,k}), 
& \text{if the discrete action space is enumerated},\\
1,
& \text{if } A\in\mathcal A_{t,k}^{\mathrm{cand}}
\text{ is sampled from } \pi_\theta(\cdot\mid Z_t^{p,k}).
\end{cases}
$$
Then, the soft-shield distribution over the finite candidate set is
\begin{equation}
\pi_{\mathrm{soft}}(A \mid S^k_t, H_t^{k}, B^k_t)
=
\frac{w_t(A)}
{\sum_{A' \in \mathcal{A}_{t, k}^{\mathrm{cand}}} w_t(A')},
\quad
A \in \mathcal{A}_{t, k}^{\mathrm{cand}}.
\label{eq:soft_shield}
\end{equation}
\textbf{Remark.}
We use the soft shield as the default because it preserves support over sampled
candidates and avoids sensitive hard-threshold decisions. Hard filtering can be
sensitive to model or critic error: when no candidate is considered as feasible,
it falls back to the lowest-predicted-cost action. Empirically, both variants
achieve similar reward--safety profiles in most environments (Appendix~\ref{app:ablation_shield_type}).
Unlike hard filtering in Appendix~\ref{app:hard_shield}, it can still choose an action with
$b^k_{Q, t}<0$. In continuous control, candidates are already drawn from the base
policy, so we omit an additional policy-density factor in the finite-candidate
weights to avoid double-counting the proposal.

\subsection{Training Objective and Asymmetric Gradient Routing}
\label{subsec:training}
The shield uses $Z_t^{p,k}$ for action proposals and $Z_t^{w,k}$ for latent prediction and cost estimation. All modules are trained before deployment and then frozen. To reduce interference between policy learning and world modeling, we use asymmetric gradient routing: the world-model loss updates the shared encoder, while stop-gradient alignment losses coordinate the projection heads without directly updating the encoder.

\textbf{Representation-shaping loss.}
We train an auxiliary one-step world model on the world-projected latent space. The loss is
$\mathcal{L}_{\mathrm{wm}}
=
\mathbb{E}~\![
-\log p_z~\!(Z_{t+1}^{w,k}\mid Z_t^{w,k}, A_t^k)
]
+
\mathbb{E}~\![
(\hat R^k_{t+1}-R^k_{t+1})^2
]
+
\mathbb{E}~\![
(\hat C^k_{t+1}-C^k_{t+1})^2
].$
This loss trains the encoder and world projection by encouraging one-step predictability of latent dynamics, reward, and cost.

\textbf{Structural alignment losses.}
The losses $\mathcal{L}_{\text{distill}}$ and $\mathcal{L}_{\text{conj}}$ align the policy and world projections through detached encoder features: $\mathcal{L}_{\text{distill}}^{\text{sg}} =
\left\|
g_{\psi}^{\text{policy}}(\text{sg}(Z^k_t))
-
\text{sg}(g_{\omega}^{\text{world}}(Z^k_t))
\right\|_2^2,$ and
$$
\mathcal{L}_{\text{conj}}^{\text{sg}} =
\left\|
\bigl(
g_{\psi}^{\text{policy}}(\text{sg}(Z^k_{t+1}))
-
g_{\psi}^{\text{policy}}(\text{sg}(Z^k_t))
\bigr)
-
\text{sg}\bigl(
g_{\omega}^{\text{world}}(Z^k_{t+1})
-
g_{\omega}^{\text{world}}(Z^k_t)
\bigr)
\right\|_2^2.
$$

Combining the actor, critic, and auxiliary losses gives the full training objective
\begin{equation}
\label{eqn:q-barrier loss}
\mathcal{L}_{\text{total}} =
\mathcal{L}_{\text{actor}}
+
\lambda_{\mathrm{critic}} \mathcal{L}_{\text{critic}}
+
\lambda_{\text{wm}}\mathcal{L}_{\text{wm}}
+
\lambda_{\text{dist}}\mathcal{L}_{\text{distill}}^{\text{sg}}
+
\lambda_{\text{conj}}\mathcal{L}_{\text{conj}}^{\text{sg}},    
\end{equation}
where ${\text{sg}}$ denotes stop-gradient with respect to the encoder.
Asymmetric gradient routing encourages $Z^w$ to support latent prediction and cost estimation while preserving policy-relevant features in $Z^p$. 


\section{Barrier Margin Propagation}\label{sec:theory}
This section gives a decomposition of the next-step barrier value into three quantities: the current
barrier margin, the one-step latent prediction error, and the Bellman
upper-bound error of the learned cost critic. 
For any fixed next-step candidate set
$\mathcal A_{t+1,k}^{\mathrm{cand}}$, define $\hat V_{C,t+1}^{+,k}(Z)
\doteq
\min_{A'\in\mathcal A_{t+1,k}^{\mathrm{cand}}}
\hat Q_C^+(Z,A'),$ with $\hat V_{C,T_k}^{+,k}=0$.
In continuous-action domains, $\mathcal A_{t+1,k}^{\mathrm{cand}}$ is the finite
set of candidate actions sampled by the shield, so all statements are
conditional on the sampled candidate set.

We first assume that the learned latent dynamics model predicts the next latent state with small one-step error, motivated by the training objective in Section~\ref{subsec:training}.
\begin{assumption}[Approximate Latent Dynamics]
\label{ass:approx_projected_latent_grounding}
Along the trajectories considered, the learned latent transition mean $f_z$ satisfies $\left\|
f_z(Z_t^{w,k}, A_t^{k}) - Z_{t+1}^{w,k}
\right\|_2
\le
\varepsilon_{\mathrm{pred}}$
for all episodes $k$ and timesteps $t$ .
\end{assumption}

We also assume that the learned cost critic changes smoothly when the latent state changes slightly.
\begin{assumption}[Latent Q-Regularity]
\label{ass:q_regular}
For $A$ in the candidate action set, the pessimistic cost critic
$\hat Q_C^+:\mathcal{Z}^{w}\times\mathcal{A}\to\mathbb{R}$
is uniformly Lipschitz. That is, there exists
$L_Q > 0$ such that
\begin{equation}
\left|
\hat Q^+_C(Z,A)-\hat Q^+_C(Z',A)
\right|
\le
L_Q \|Z-Z'\|_2
\quad
\text{for all } Z,Z' \in \mathcal{Z}^{w}.
\label{eq:q_lipschitz}
\end{equation}
\end{assumption}

We next isolate the critic's one-step Bellman error as an explicit residual term.
\begin{definition}[Bellman Upper-Bound Error]
\label{def:bellman_error}
For transition $(Z_t^{w,k},A_t^k,C_{t+1}^k)$, define
\begin{equation}
\Delta_{\mathrm{bell},t}^{+,k}
\doteq
\left[
C_{t+1}^k
+
\hat V_{C,t+1}^{+,k}
\!\left(f_z(Z_t^{w,k},A_t^k)\right)
-
\hat Q_C^+(Z_t^{w,k},A_t^k)
\right]_+ .
\label{eq:bellman_error}
\end{equation}
\end{definition}
This value measures how much the critic violates the one-step Bellman upper bound.
$\Delta_{\mathrm{bell},t}^{+, k}=0$ when the critic holds the one-step upper-bound condition
$\hat Q^+_C(Z_t^{w,k},A_t^k)\ge C_{t+1}^k+\hat V^{+, k}_{C, t+1}(f_z(Z_t^{w,k},A_t^k))$.
The main theorem below shows how the barrier margin propagates forward by losing Bellman error and model-prediction error.
\begin{theorem}[Barrier Margin Propagation on the Latent Space]
\label{thm:barrier_margin_propagation}
Under Assumptions~\ref{ass:approx_projected_latent_grounding}
and~\ref{ass:q_regular}, for any selected action $A_t^k$,
$
b_{V,t+1}^k(Z_{t+1}^{w,k},B_{t+1}^k)
\ge
b_{Q,t}^k(Z_t^{w,k},B_t^k,A_t^k)
-
\Delta_{\mathrm{bell},t}^{+,k}
-
L_Q\varepsilon_{\mathrm{pred}} .
$
\end{theorem}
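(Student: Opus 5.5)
The argument is a direct chain of inequalities: unfold $b_{V,t+1}^k$, use the budget recursion to express $B_{t+1}^k$ through $B_t^k$, and then bound the next-step value at the true latent by its value at the predicted latent $f_z(Z_t^{w,k},A_t^k)$.

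\textbf{Step 1 (budget recursion).} By definition of the remaining budget, $B_{t+1}^k = B_t^k - C_{t+1}^k$. Hence
\[
b_{V,t+1}^k(Z_{t+1}^{w,k},B_{t+1}^k) = B_t^k - C_{t+1}^k - \hat V_{C,t+1}^{+,k}(Z_{t+1}^{w,k}).
\]

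\textbf{Step 2 (transfer the Lipschitz bound through the min).} For each fixed $A'\in\mathcal A_{t+1,k}^{\mathrm{cand}}$, Assumption~\ref{ass:q_regular} gives
\[
\hat Q_C^+(Z_{t+1}^{w,k},A') \le \hat Q_C^+(f_z(Z_t^{w,k},A_t^k),A') + L_Q\bigl\|Z_{t+1}^{w,k}-f_z(Z_t^{w,k},A_t^k)\bigr\|_2 .
\]
Taking the minimum over the finite candidate set on both sides preserves the inequality, because the additive term does not depend on $A'$. Assumption~\ref{ass:approx_projected_latent_grounding} then gives
\[
\hat V_{C,t+1}^{+,k}(Z_{t+1}^{w,k}) \le \hat V_{C,t+1}^{+,k}(f_z(Z_t^{w,k},A_t^k)) + L_Q\varepsilon_{\mathrm{pred}} .
\]
At the terminal step, $\hat V_{C,T_k}^{+,k}\equiv 0$, so the bound holds trivially there.

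\textbf{Step 3 (Bellman residual).} From Definition~\ref{def:bellman_error} and the inequality $x\le[x]_+$, we get
\[
C_{t+1}^k + \hat V_{C,t+1}^{+,k}(f_z(Z_t^{w,k},A_t^k)) \le \hat Q_C^+(Z_t^{w,k},A_t^k) + \Delta_{\mathrm{bell},t}^{+,k}.
\]

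\textbf{Step 4 (combine).} Substituting Steps 2 and 3 into Step 1 yields
\[
b_{V,t+1}^k \ge B_t^k - \hat Q_C^+(Z_t^{w,k},A_t^k) - \Delta_{\mathrm{bell},t}^{+,k} - L_Q\varepsilon_{\mathrm{pred}},
\]
and the first two terms on the right are exactly $b_{Q,t}^k(Z_t^{w,k},B_t^k,A_t^k)$.

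Every step is elementary. The only point needing care is Step 2: the Lipschitz bound is stated per action, and it must pass through the minimum. This works because the candidate set is fixed (we condition on it) and finite, and the Lipschitz constant is uniform over candidate actions. The same fixed next-step candidate set must be used both in $b_{V,t+1}^k$ and in the Bellman error, so that the two $\hat V_{C,t+1}^{+,k}$ terms cancel consistently.
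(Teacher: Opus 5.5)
Your proof is correct and follows essentially the same argument as the paper. The paper also uses $B_{t+1}^k=B_t^k-C_{t+1}^k$ and bounds the Bellman term below by $-\Delta_{\mathrm{bell},t}^{+,k}$. It then controls $\hat V_{C,t+1}^{+,k}(f_z(Z_t^{w,k},A_t^k))-\hat V_{C,t+1}^{+,k}(Z_{t+1}^{w,k})$ by $L_Q\varepsilon_{\mathrm{pred}}$ through a separate Lipschitz-through-the-min lemma, which it proves two-sided with an argmin argument; you need only the one-sided bound, and you get it by taking the min over both sides.
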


The theorem implies an approximate one-step continuation property: if
$b^k_{Q, t}(Z_t^{w,k},B_t^k,A_t^k)\ge 0$, then $b^k_{V, t+1}(Z_{t+1}^{w,k},B_{t+1}^k)
\ge
-\Delta_{\mathrm{bell},t}^{+, k}
-
L_Q\varepsilon_{\mathrm{pred}}.$
Thus, the next latent-budget state remains approximately feasible up to Bellman and prediction
errors. With enough positive barrier margin, it has a nonnegative continuation
margin. This is a conditional statement about the selected action, not a
closed-loop guarantee for the soft shield, which may still sample
actions with $b^k_{Q, t}<0$. For continuous-action settings, action selections are over the finite candidate set evaluated by the shield.
See Appendix~\ref{apx:proof} for proofs, and Appendix~\ref{apx:episode_bound} for an episode-level budget bound.
\section{Experiments}\label{sec:experiments}
We evaluate our method on the safe ICRL benchmark~\citep{moeini2026scared}, comparing against safe ICRL methods~\citep{moeini2026scared} and safe meta-RL baselines augmented with a context encoder~\citep{finn2017model,xu2025efficient}. Our experiments are designed to answer two questions: (i) whether shielding improves the reward-safety tradeoff during in-context adaptation without parameter updates, and (ii) how this tradeoff changes as the safety budget varies.

\subsection{Baselines}
We compare against two baseline groups. The first consists of parameter-update-free safe adaptation methods, which adapt through context at test time without gradient updates. This group includes Safe AD and SCARED, which differ mainly in how safety is learned during pretraining. The second consists of safe meta-RL methods, which adapt through per-task parameter updates at test time.

\textbf{Safe Algorithm Distillation (Safe AD).}
Following algorithm distillation~\citep{laskin2023incontext}, Safe AD~\citep{moeini2026scared} trains a history-conditioned policy on logged learning trajectories generated by safe RL algorithms. At test time, the policy conditions on the interaction history together with RTG and CTG targets, without any parameter updates.

\textbf{SCARED.}
This is the state-of-the-art safe ICRL baseline trained with an exact-penalty dual objective to improve reward-safety tradeoffs through in-context adaptation~\citep{moeini2026scared}. We use SCARED as the base policy for our shield.

\textbf{Safe Meta-RL.}
We compare against gradient-based safe meta-RL methods: MAML with penalty~\citep{finn2017model} and SafeMeta~\citep{xu2025efficient}. MAML with penalty learns a cost-regularized initialization, while SafeMeta
meta-trains a closed-form one-step safe adaptation rule. Unlike in-context baselines, both
adapt through parameter updates rather than conditioning on cross-episode
histories.

\subsection{Experimental Setup}

\textbf{Safe ICRL Benchmarks.}
We evaluate our approach on three safe ICRL benchmarks~\citep{moeini2026scared} studying two complementary generalization regimes: \emph{structural OOD} tasks and \emph{unseen ID} tasks. For structural OOD generalization, we use SafeDarkRoom and SafeDarkMujoco (Point, Car), where test tasks involve shifted goal and obstacle configurations under partial observability: Range-based sensing (e.g., lidar) is unavailable, so the agent cannot directly observe goal or obstacle locations and must adapt from in-context
history. For unseen ID generalization, we use SafeVelocity (HalfCheetah, Ant), where each task is specified by a target velocity. We train on a subset of target velocities from a fixed range and evaluate on held-out velocities from the same range. See Appendix~\ref{app:env_details} for more details.

\textbf{Training Setup.}
For the structural OOD environments, all algorithms are trained on source tasks
with center-oriented obstacles and goals and evaluated on 100 OOD test tasks with edge-oriented obstacle and goal distributions.
For the unseen-ID environments, target-velocity multipliers are sampled uniformly
from $[0.5,2.0]$ during training. We evaluate each method on 100 held-out
contexts sampled from the same range, keeping the target velocity fixed across
the in-context episode adaptation.

Safe AD follows the training protocol of~\citep{moeini2026scared}, using
trajectories generated by PPO-Lag~\citep{ray2019benchmarking} and the training hyperparameters.
We use SCARED as the base policy and keep its original pretraining budgets:
3K epochs for SafeDarkRoom and SafeDarkMujoco, and 2K for SafeVelocity.
An epoch is one collection-update cycle, consisting of 1.5K environment steps
per training environment followed by 1K optimization batches. The
shield-supporting modules in Section~\ref{sec:approach} are trained during this
same pretraining phase, with no additional epochs. At test time, Q-Barrier shield reweights candidate actions using Equation~\eqref{eq:soft_shield}, without parameter updates. Safe meta-RL baselines are trained for up to 15K meta-training steps on each environment, with most converging earlier.
For Safe AD and the safe meta-RL baselines, we perform hyperparameter optimization to ensure a fair comparison. 
All loss weights in Equation~\eqref{eqn:q-barrier loss} for our
shield-supporting modules are fixed across environments. See Appendix~\ref{app:training_details} for full hyperparameters
and compute details. 

\textbf{Evaluation Setup.}
To answer the two research questions, we use a consistent target-conditioning setup across methods.
For SCARED and our Q-Barrier, we evaluate robustness to safety budgets by conditioning on CTG values sampled uniformly from $[1,15]$ in SafeDarkRoom, $[10,50]$ in SafeDarkMujoco, and $[0,5]$ in SafeVelocity, and we report performance averaged over 100 distinct CTG targets. 
Q-Barrier uses the soft shield with a candidate-set size of $N_s=8$ in
continuous-action environments. 
The same CTG intervals and number of targets are used for SafeMeta, MAML with
penalty, and Safe AD. For Safe AD, following~\citep{moeini2026scared}, we use
paired RTG-CTG conditioning: each CTG target sampled from the same interval is
paired with a corresponding RTG target.
In SafeDarkRoom and SafeDarkMujoco, RTG is set as
$\max\{0.1,\mathrm{CTG}/\mathrm{CTG}_{\max}\}$ with
$\mathrm{CTG}_{\max}=15$ and $50$, respectively. In the unseen-ID
tasks, RTG and CTG are jointly scaled from $0.5$ to $\mathrm{RTG}_{\max}=200$, and $\mathrm{CTG}_{\max}=5$ for both.
Values report means over the 100 evaluation rollouts in Figures~\ref{fig:rq1} and~\ref{fig:rq2}, with shaded regions denoting standard error of the mean.

\subsection{Results Analysis}

\textbf{RQ1: Does shielding improve the reward-safety tradeoff during in-context adaptation without parameter updates?}

\begin{figure*}[t]
    \centering
    \includegraphics[width=1.0\linewidth]{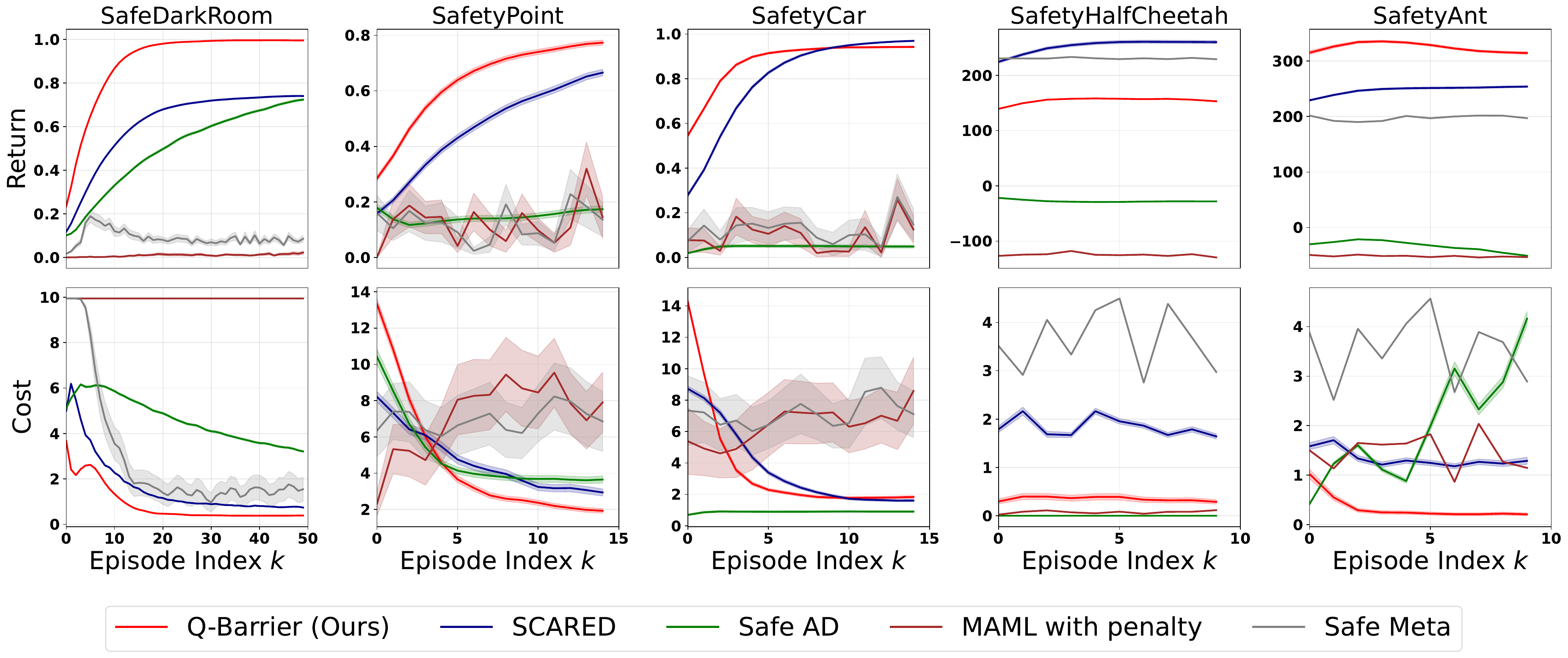}
    \caption{
    \textbf{In-context adaptation dynamics (RQ1).}
    Per-episode return (top) and cost (bottom) over in-context episode index $k$ on test-time evaluation tasks. 
     Curves show means over 100 evaluation tasks, with shaded regions denoting standard error of the mean.
    Q-Barrier generally improves adaptation by reaching high return earlier and reducing cost
after a short context window, while SafetyHalfCheetah shows a conservative Pareto point
with lower cost and lower return.
    }
    \label{fig:rq1}
\end{figure*}

We compare Q-Barrier and the baselines over in-context episodes on
OOD test tasks, measuring how return and cost evolve as context accumulates
(Figure~\ref{fig:rq1}). Since Q-Barrier uses SCARED as its base policy, the most
direct comparison is against SCARED. Safe AD, Safe Meta, and MAML with penalty provide additional safe ICRL and safe
meta-RL baselines. During OOD adaptation, these baselines fail to reduce cost consistently or become overly conservative sacrificing
return.

Q-Barrier generally improves the in-context adaptation by reaching high return
earlier while reducing cost after a short context window. In
SafeDarkRoom and SafetyAnt, the improvement is immediate. 
In SafeDarkRoom, Q-Barrier achieves mean return
$0.9$ compared with $0.62$ for SCARED, a $46\%$ gain, while reducing mean
cost from $1.66$ to $0.87$, a $47.2\%$ drop. 
In SafetyAnt, Q-Barrier
achieves mean return $324.36$ compared with $247.8$ for SCARED, a $30.9\%$
gain, while reducing mean cost from $1.34$ to $0.34$, a $74.3\%$ drop.

DarkMujoco navigation tasks show a different but informative pattern:
Q-Barrier initially encounters higher cost, but this early cost comes with
higher return. 
In SafetyPoint and SafetyCar, Q-Barrier first becomes lower-cost than SCARED at episodes
$k=3$ and $k=2$, respectively. Averaged over the remaining adaptation, 
Q-Barrier reduces mean cost from $4.0$ to $2.99$ in SafetyPoint, a $25.3\%$
drop, and from $2.95$ to $2.37$ in SafetyCar, a $19.7\%$ drop.
At the same time, Q-Barrier maintains an early return advantage in both tasks.
For example, in SafetyCar, Q-Barrier exceeds return $0.9$ by episode $k=5$,
whereas SCARED reaches this level around episode $k=7$. 

SafetyHalfCheetah is the main conservative case. During the full in-context
adaptation, Q-Barrier consistently lowers cost but also lowers return
relative to SCARED. Averaged over the adaptation, it reduces mean cost
from $1.84$ to $0.35$, a $81.0\%$ drop, while mean return decreases from
$252.75$ to $154.26$, a $39.0\%$ drop. This places Q-Barrier at a more
conservative point on the reward-cost frontier. 
The results in Appendix~\ref{app:diagnostics} provide a plausible
explanation: SafetyHalfCheetah has the largest latent prediction error among the
evaluated environments, which can make the learned barrier more conservative.

\textit{Takeaway:  Q-Barrier improves the OOD reward-safety tradeoff
during in-context adaptation, with SafetyHalfCheetah showing a conservative
Pareto point that trades lower return for lower cost.}

\textbf{RQ2: How does the reward-safety tradeoff change across safety budgets?}

We next evaluate the objective in Equation~\eqref{eq:safe_icrl_objective}
by sweeping the cost budget and measuring cumulative return together with average
episode cost over the in-context adaptation. 
\begin{figure*}[t]
    \centering
    \includegraphics[width=1.0\linewidth]{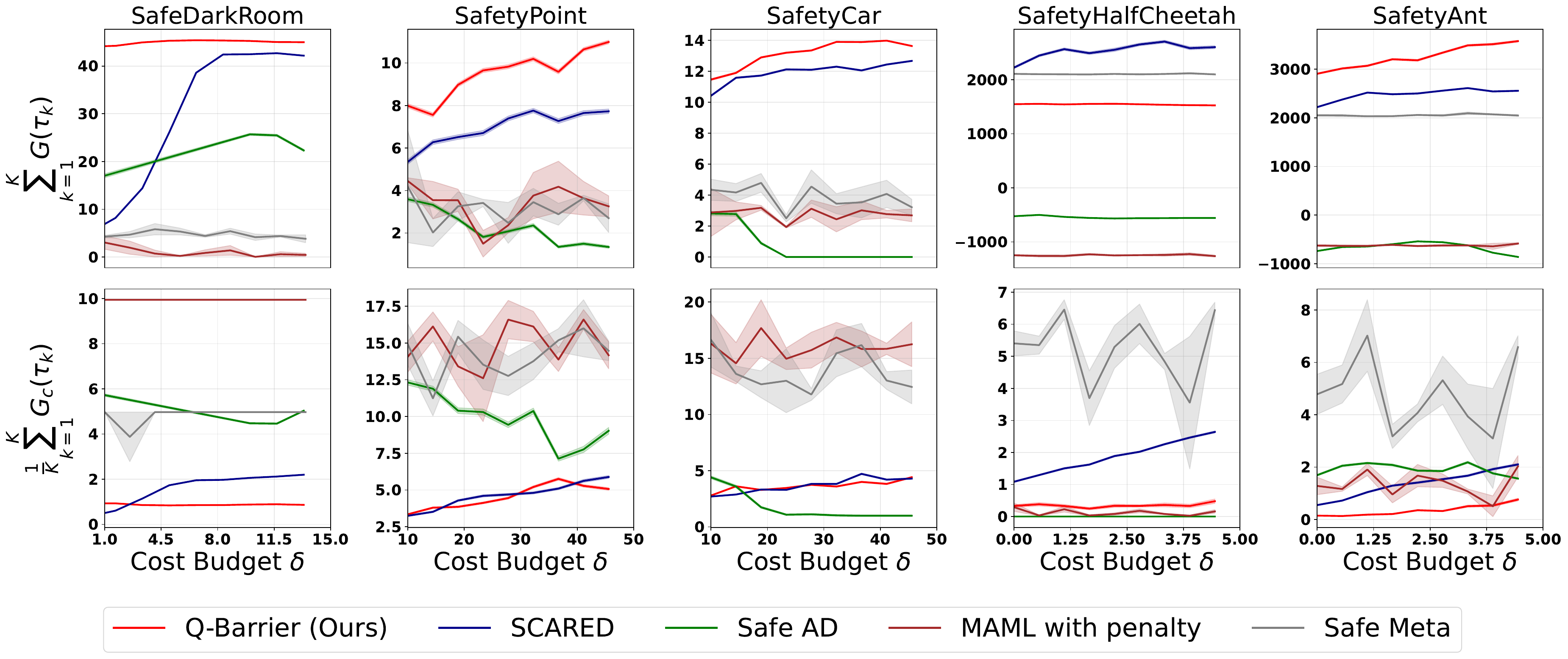}
    \caption{
    \textbf{Reward-cost tradeoffs under budget variation (RQ2).}
    Cumulative reward (top) and average episode cost (bottom) over varying
    cost budgets on test-time evaluation tasks. Across different budget levels, Q-Barrier achieves the highest cumulative return in
    four of five environments while maintaining budget
compliance in most cases, with violations mostly limited to near-zero budgets.
    Shaded
regions denote standard errors of the mean over 100 evaluation tasks.
    }
    \label{fig:rq2}
\end{figure*}

Figure~\ref{fig:rq2} shows that Q-Barrier achieves the highest cumulative
return in four of five environments across different budgets: SafeDarkRoom, SafetyPoint, SafetyCar, and
SafetyAnt. It also satisfies the average episode-cost constraint across most
budget settings in the sweep. Violations occur only at near-zero
budgets, where even small residual costs exceed the requested budget. 
This is the most difficult setting for Q-Barrier, since small model or
critic errors can produce a violation when the allowed cost is near zero.

Compared with SCARED base policy, Q-Barrier improves average
cumulative return over budget levels by $54.4\%$ in SafeDarkRoom, $36.3\%$ in
SafetyPoint, $10.1\%$ in SafetyCar, and $31.0\%$ in SafetyAnt. These gains are
not obtained by simply spending more cost. Q-Barrier also reduces average episode
cost relative to SCARED by $44.1\%$ in SafeDarkRoom, $2.1\%$ in SafetyPoint,
$1.0\%$ in SafetyCar, and $74.2\%$ in SafetyAnt. 
Thus, in most environments, Q-Barrier improves return without increasing average
cost, and in SafeDarkRoom and SafetyAnt it improves both return and cost by large
margins. 

SafeDarkRoom demonstrates the benefit of explicit budget-aware action selection.
SCARED becomes conservative as the budget varies and does not fully convert
available budget into return. 
In particular, given budget near $1$, Q-Barrier exceeds return $40$ while satisfying budget constraints, but SCARED stays under return $10$.

SafetyHalfCheetah is the main exception. Q-Barrier reduces average episode cost
by $81.3\%$ relative to SCARED, but its cumulative return is $39.1\%$ lower.
Safe Meta also achieves higher cumulative return than Q-Barrier, but Safe Meta often
violates the cost budget by a large margin (e.g., average cost around $6$ when cost budget is $1$).
This is consistent with the conservative behavior observed in RQ1, which can be explained by the largest latent prediction error.

\textit{Takeaway: Q-Barrier improves the OOD reward-safety frontier across
budget levels in most environments, often increasing return while matching or
reducing cost. SafetyHalfCheetah remains the conservative case, trading lower
return for substantially lower cost.}

\textbf{Ablation Studies.}
We include ablations and diagnostics for the main shielding design choices.
First, we compare the soft shield (Equation~\ref{eq:soft_shield}) against a hard truncation variant that restricts candidates to those with nonnegative barrier margin, formally defined in Appendix~\ref{app:hard_shield}.
The soft shield matches the hard shield's reward-safety tradeoffs while
preserving broader action support during early adaptation. Since hard filtering
provides marginal and non-uniform cost reductions, we use soft reweighting
in all main experiments (Appendix~\ref{app:ablation_shield_type}).
Second, we study how the number of sampled candidate actions $N_s$ affects shielding in continuous-control environments, sweeping $N_s \in \{4, 8, 16, 32\}$.
Performance is stable across $N_s$ in SafeDarkMujoco, while SafeVelocity is more
affected by $N_s$: larger candidate sets raise both return and cost (Appendix~\ref{app:ablation_sampling}).
Appendix~\ref{app:diagnostics} reports empirical measurements of the learned quantities appearing in the barrier-margin bound (Theorem~\ref{thm:barrier_margin_propagation}), including latent prediction error, critic sensitivity, and Bellman residuals, confirming that larger errors correlate with the conservative shielding observed in SafetyHalfCheetah.
Appendix~\ref{app:algorithms} provides pseudocode for the training and runtime shielding procedures, Appendix~\ref{app:env_details} describes the benchmark environments and OOD generation protocol, and Appendix~\ref{app:training_details} lists full hyperparameters for all methods.
\section{Related Work}\label{apx:related_works}
\startpara{Safe Reinforcement Learning}
Safe reinforcement learning is commonly formalized through constrained Markov
decision processes (CMDPs), where policies maximize reward subject to expected
cost constraints. Existing methods include constrained policy optimization,
Lagrangian and primal-dual algorithms, projection-based policy optimization,
recovery policies, model-based safe RL, and sample-efficient constraint-handling
methods
\citep{achiam2017constrained,wachi2020safe,ray2019benchmarking,
zhang2020focops,yang2020PCPO,yang2022cup,gu2024enhancing,as2025actsafe}.
Runtime shielding provides a complementary mechanism by filtering,
modifying, or replacing actions before execution
\citep{Mohammed2017,carr2023shielding,yang2023safe}, and recent safe planning
and shielding methods use adaptive conformal prediction to calibrate uncertainty
in safety-critical decision making
\citep{lindemann2023safe_planning,sheng2024pomdp_online_shield,
sheng2024daptive_conformal,scarbo2025conformal}. Control barrier functions
provide certificate-based safety guarantees, but constructing valid barriers
under learned, uncertain, or varying dynamics often requires explicit model
knowledge or robust uncertainty bounds
\citep{choi2020reinforcement,cheng2023safe,ganai2023iterative,
wang2023enforcing,xiao2023bnet}. 
Our work is complementary to these methods: we introduce a
budget-aware Q-Barrier shield for frozen in-context RL policies.

\startpara{In-Context Reinforcement Learning and Safe Adaptation}
In-context reinforcement learning (ICRL) studies agents that adapt to new tasks
at test time without updating parameters, by conditioning the policy on
an expanding interaction history~\citep{moeini2025survey}.
This perspective is
closely related to meta-RL, and contextual or
hidden-parameter RL, where agents infer task or dynamics information from
history and use it to improve behavior online
\citep{finn2017model,yang2019single,chen2021context,chen2022an,
benjamins2023contextualize}. Recent ICRL methods use transformers, recurrent
models, state-space models, or algorithm-distillation objectives to induce
adaptation behavior in the forward pass of a pretrained policy
~\citep{duan2016rl,wang2016learning,xu2022prompting,lee2023supervised,laskin2023incontext,raparthy2023generalization,kirsch2023towards,dai2024context,
dongInContextReinforcementLearning2024,krishnamurthy2024can,grigsby2023amago,grigsby2024amago,
wang2025transformers,wang2025towards}. Safe meta-RL extends fast adaptation to
constrained settings, but often relies on explicit task-level adaptation,
gradient updates, or additional offline adaptation data
\citep{luo2021mesa,khattar2023a,guan2024cost,xu2025efficient}. Closest to our
setting, SCARED formulates safe ICRL under CMDPs and learns budget-sensitive
policies through Lagrangian pretraining without test-time parameter updates
\citep{moeini2026scared}. Our method focuses on a deployment-time gap: 
with a fixed pretrained ICRL
policy, action selection may still place probability on actions whose predicted
future cost exceeds the remaining budget. We therefore add a runtime barrier
that reweights candidate actions using the current latent context, remaining
budget, and predicted cumulative cost.

\startpara{Latent Representations, Safety Filters, and Auxiliary Training}
Our shield relies on latent representations that support policy adaptation,
latent dynamics prediction, and safety estimation. Prior work on context
encoders, function encoders, and world-model representations shows that learned
latent variables can capture task or dynamics information useful for zero-shot
transfer, in-context adaptation, and hidden-parameter generalization
\citep{luo2022context,tyler2024zero_shot,ingebrand2024zeroshot_ode,
prasanna-rlc24a,koprulu2025safetyprioritizing,kwon2026adaptiveshieldingsafereinforcement}. Closely related to our runtime safety mechanism, recent latent safety-filter methods perform safety analysis
directly in learned world-model spaces using latent reachability,
uncertainty-aware filtering, or runtime-adaptive safety constraints
\citep{Nakamura2025,seo2025uncertainty,agrawal2025anysafe}. These works provide
important precedent for latent-space runtime safety reasoning, but our mechanism
differs: we use a CMDP-style remaining budget, a world-projected latent state,
one-step latent dynamics, and a pessimistic cost-to-go critic to define a latent
Q-barrier for candidate-action reweighting. Finally, our training objective uses
auxiliary dynamics and safety losses to shape the representation. Auxiliary
prediction objectives are widely used to improve RL representations
\citep{jaderberg2017reinforcement,laskin2020curl,schwarzer2021pretraining, zhao2023simplified} but joint
policy, dynamics, and safety optimization can create gradient interference. To
reduce this interference, we use an asymmetric stop-gradient scheme inspired by
self-supervised representation learning methods such as BYOL, where
prediction heads are trained against target representations while gradients are
selectively blocked \citep{grill2020bootstrap}. In our setting, grounding losses
update the shared encoder, while alignment losses pass through
projection heads with stop-gradient, allowing auxiliary supervision to
regularize the latent space without directly destabilizing policy learning.
\section{Conclusion}\label{sec:conclusion}
We introduced Q-Barrier, a latent shielding mechanism that makes frozen safe
ICRL policies explicitly budget-aware under test-time generalization shifts,
including both structural OOD and unseen ID settings. The method
augments a safe ICRL policy with shield-supporting modules learned during
training: a world-projected latent representation, a latent dynamics model, and
a cost-critic ensemble. At deployment, all learned parameters remain fixed, and
Q-Barrier only reweights finite action candidates using the remaining budget and
predicted cost-to-go, so adaptation comes from the growing in-context history
rather than online parameter updates.

Our analysis shows that an action with sufficient Q-Barrier margin preserves an
approximate next-step budget-safe continuation over the next candidate set, up to
Bellman upper bound and latent prediction errors. Empirically, Q-Barrier improves
reward-safety tradeoffs during test-time in-context adaptation. After a short
context window, it achieves higher return in four of five environments while
matching or lowering average episode cost in all five, with one environment
showing a conservative lower-cost/lower-return tradeoff. Across budget sweeps, it
achieves the highest return in four of five environments while generally
matching or reducing average cost relative to baselines.

\textbf{Limitations and future work.}
Q-Barrier depends on latent prediction accuracy, critic calibration, Bellman
residuals, and sampled candidate quality. Overly pessimistic cost estimates can
downweight useful high-return actions. Future work should improve uncertainty
calibration, adapt shield strength to local uncertainty and remaining budget,
extend to multiple constraints, and develop stronger closed-loop analyses for
stochastic shielded policies.


\bibliography{bibliography}
\appendix

\section{Proofs}\label{apx:proof}

\subsection{Proof of Theorem~\ref{thm:barrier_margin_propagation}}
Before proving the main theorem, we first prove a value error lemma.
Under Assumptions~\ref{ass:approx_projected_latent_grounding}
and~\ref{ass:q_regular}, the lemma shows that latent prediction error can change
the estimated next-step continuation value by at most
$L_Q\varepsilon_{\mathrm{pred}}$.

\begin{lemma}
\label{lem:value_perturbation_grounding}
Under Assumptions~\ref{ass:approx_projected_latent_grounding} and~\ref{ass:q_regular},
\begin{equation}\label{eq:value_error_bound}
\left|
\hat V_{C,t+1}^{+, k}\!\bigl(f_z(Z_t^{w,k},A_t^k)\bigr)
-
\hat V_{C,t+1}^{+, k}(Z_{t+1}^{w,k})
\right|
\le
L_Q\varepsilon_{\mathrm{pred}}.    
\end{equation}
\end{lemma}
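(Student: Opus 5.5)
The plan is to use the fact that a pointwise minimum of a finite family of $L_Q$-Lipschitz functions is itself $L_Q$-Lipschitz. Once that is established, evaluate it at the two latent points $f_z(Z_t^{w,k},A_t^k)$ and $Z_{t+1}^{w,k}$, and invoke Assumption~\ref{ass:approx_projected_latent_grounding} to bound their distance by $\varepsilon_{\mathrm{pred}}$.

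\emph{Terminal case.} If $t+1=T_k$, the convention $\hat V_{C,T_k}^{+,k}=0$ makes the left-hand side of \eqref{eq:value_error_bound} zero, and the bound is immediate.

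\emph{Lipschitz property of the minimum.} Otherwise, fix the finite candidate set $\mathcal A_{t+1,k}^{\mathrm{cand}}$; in the continuous case we condition on the sampled set. Write $Z\doteq f_z(Z_t^{w,k},A_t^k)$ and $Z'\doteq Z_{t+1}^{w,k}$, both in $\mathcal Z^w$. Since the set is finite and nonempty, there is a minimizer $A'_\star\in\mathcal A_{t+1,k}^{\mathrm{cand}}$ of $\hat Q_C^+(Z',\cdot)$. Then
\[
\hat V_{C,t+1}^{+,k}(Z)-\hat V_{C,t+1}^{+,k}(Z')
\le \hat Q_C^+(Z,A'_\star)-\hat Q_C^+(Z',A'_\star)
\le L_Q\|Z-Z'\|_2,
\]
where the first inequality holds because $\hat V_{C,t+1}^{+,k}(Z)\le \hat Q_C^+(Z,A'_\star)$, and the second is Assumption~\ref{ass:q_regular}, which applies because $A'_\star$ lies in the candidate set. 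Exchanging the roles of $Z$ and $Z'$, with a minimizer at $Z$, gives the reverse inequality. Hence $|\hat V_{C,t+1}^{+,k}(Z)-\hat V_{C,t+1}^{+,k}(Z')|\le L_Q\|Z-Z'\|_2$.

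\emph{Conclusion.} Assumption~\ref{ass:approx_projected_latent_grounding} gives $\|Z-Z'\|_2\le\varepsilon_{\mathrm{pred}}$, which yields \eqref{eq:value_error_bound}.

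\emph{Main obstacle.} The only delicate point is bookkeeping. The Lipschitz constant must be uniform over the actions in the next-step candidate set, and both $f_z(\cdot)$ and $Z_{t+1}^{w,k}$ must lie in $\mathcal Z^w$, where the assumption is stated. If the predictive mean could leave $\mathcal Z^w$, I would assume, or argue from the architecture, that $\mathcal Z^w$ is convex and contains the range of $f_z$. Once the lemma holds, Theorem~\ref{thm:barrier_margin_propagation} follows. Use $B_{t+1}^k=B_t^k-C_{t+1}^k$ and Definition~\ref{def:bellman_error}, which gives $\hat V_{C,t+1}^{+,k}(f_z)\le \hat Q_C^+(Z_t^{w,k},A_t^k)-C_{t+1}^k+\Delta_{\mathrm{bell},t}^{+,k}$. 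Then replace $f_z$ by $Z_{t+1}^{w,k}$ at a cost of $L_Q\varepsilon_{\mathrm{pred}}$.
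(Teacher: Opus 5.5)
Your proof is correct and takes essentially the same route as the paper. The paper also fixes the realized next-step candidate set, picks a minimizer $A^\star$ of $\hat Q_C^+(Z',\cdot)$ to show that the pointwise minimum is $L_Q$-Lipschitz, swaps $Z$ and $Z'$, and then applies Assumption~\ref{ass:approx_projected_latent_grounding}. Your extra remarks on the terminal case $t+1=T_k$ and on whether $f_z$ lands in $\mathcal Z^w$ are harmless refinements of points the paper leaves implicit.
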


\begin{proof}
Fix episode $k$ and timestep $t$, and condition on the realized next-step
candidate set $\mathcal A_{t+1,k}^{\mathrm{cand}}$. Write $\hat V_C^+(Z)
\doteq
\hat V_{C,t+1}^{+,k}(Z)
=
\min_{A'\in\mathcal A_{t+1,k}^{\mathrm{cand}}}
\hat Q_C^+(Z,A').$
For readability, we omit $+$ and $k$ in intermediate
expressions when there is no ambiguity.
Since $\mathcal A_{t+1,k}^{\mathrm{cand}}$ is fixed in this argument and each
function $\hat Q_C^+(\cdot,A')$ is $L_Q$-Lipschitz, their pointwise minimum is
also $L_Q$-Lipschitz. To see this, let
$A^\star\in\arg\min_{A'\in\mathcal A_{t+1,k}^{\mathrm{cand}}}
\hat Q_C^+(Z',A')$. Then
$\hat V_C^+(Z)-\hat V_C^+(Z')
\le
\hat Q_C^+(Z,A^\star)-\hat Q_C^+(Z',A^\star)
\le
L_Q\|Z-Z'\|_2.$
Swapping $Z$ and $Z'$ leads to $|\hat V_C^+(Z)-\hat V_C^+(Z')|
\le
L_Q\|Z-Z'\|_2.$ Applying this bound with
$Z=f_z(Z_t^{w,k},A_t^k)$ and $Z'=Z_{t+1}^{w,k}$, and using
Assumption~\ref{ass:approx_projected_latent_grounding}, proves
Equation~\eqref{eq:value_error_bound}.
\end{proof}

\begin{proof}
Fix episode $k$ and timestep $t$, and condition on the realized next-step
candidate set $\mathcal A_{t+1,k}^{\mathrm{cand}}$. We follow the same convention used in Proof of Lemma~\ref{lem:value_perturbation_grounding}.

Starting from the next-step state barrier and using the budget recursion
$B_{t+1}^k=B_t^k-C_{t+1}^k$, we have
\begin{align*}
b_{V,t+1}^k(Z_{t+1}^{w,k},B_{t+1}^k)
&=
B_{t+1}^k-\hat V_C^+(Z_{t+1}^{w,k}) \\
&=
B_t^k-C_{t+1}^k-\hat V_C^+(Z_{t+1}^{w,k}) \\
&=
\underbrace{
B_t^k-\hat Q_C^+(Z_t^{w,k},A_t^k)
}_{b_{Q,t}^k(Z_t^{w,k},B_t^k,A_t^k)}
\\
&\quad+
\Bigl(
\hat Q_C^+(Z_t^{w,k},A_t^k)
-
C_{t+1}^k
-
\hat V_C^+(f_z(Z_t^{w,k},A_t^k))
\Bigr)
\\
&\quad+
\Bigl(
\hat V_C^+(f_z(Z_t^{w,k},A_t^k))
-
\hat V_C^+(Z_{t+1}^{w,k})
\Bigr).
\end{align*}
By the definition of the Bellman upper-bound residual,
$$\hat Q_C^+(Z_t^{w,k},A_t^k)
-
C_{t+1}^k
-
\hat V_C^+(f_z(Z_t^{w,k},A_t^k))
\ge
-\Delta_{\mathrm{bell},t}^{+,k}.$$
By Lemma~\ref{lem:value_perturbation_grounding},
$\hat V_C^+(f_z(Z_t^{w,k},A_t^k))
-
\hat V_C^+(Z_{t+1}^{w,k})
\ge
-L_Q\varepsilon_{\mathrm{pred}}.
$
Substituting both inequalities gives
$$b_{V,t+1}^k(Z_{t+1}^{w,k},B_{t+1}^k)
\ge
b_{Q,t}^k(Z_t^{w,k},B_t^k,A_t^k)
-
\Delta_{\mathrm{bell},t}^{+,k}
-
L_Q\varepsilon_{\mathrm{pred}},$$
which proves the claim.
\end{proof}

\subsection{Episode-Level Budget Bound}
\label{apx:episode_bound}
Theorem~\ref{thm:barrier_margin_propagation} is a one-step result.
This subsection shows that it can be extended into an episode-level bound on cumulative budget accumulation by introducing a term that captures the gap between the action actually selected and the critic-minimizing action.

\begin{definition}[Selection Gap]
\label{def:selection_gap}
For the action selected at time $t$ in episode $k$, define
\begin{equation}
\Delta_{\mathrm{sel},t}^k
\doteq
\hat Q_C^+(Z_t^{w,k},A_t^k)
-
\hat V_{C,t}^{+,k}(Z_t^{w,k})
\ge 0.
\label{eq:selection_gap}
\end{equation}
\end{definition}
This quantity is the excess predicted cost-to-go of the selected action over the minimum available in the candidate set at that state.
It is zero when the deployed policy selects the critic-minimizing action, and positive when stochastic shielding picks a higher-cost action.Recall that 
$$b_{V,t}^k(Z,B)
\doteq
B-\hat V_{C,t}^{+,k}(Z),
\quad
b_{Q,t}^k(Z,B,A)
\doteq
B-\hat Q_C^+(Z,A).$$
For continuous-action environments, all quantities are conditioned on the
realized finite candidate sets. The candidate set
$\mathcal A_{t+1,k}^{\mathrm{cand}}$ used in the Bellman residual at time $t$
is the same candidate set used to define
$b_{V,t+1}^k$ at the next decision making.
We assume that each realized candidate set is nonempty and that the executed
action $A_t^k$ is selected from $\mathcal A_{t,k}^{\mathrm{cand}}$.
Thus $\Delta_{\mathrm{sel},t}^k \ge 0$. In continuous-action
domains, the proposition is conditional on the entire realized sequence of
finite candidate sets.

\begin{proposition}[Episode-Level Budget Bound]
\label{prop:episode_budget_bound}
Suppose Theorem~\ref{thm:barrier_margin_propagation} holds along episode $k$
with the signed budget recursion
$B_{t+1}^k=B_t^k-C_{t+1}^k$. Assume the terminal condition $\hat V_{C,T_k}^{+,k}(Z_{T_k}^{w,k})=0.$
Then
\begin{equation}
G_{c,0}(\tau_k)-B_0^k
\le
-b_{V,0}^k(Z_0^{w,k},B_0^k)
+
\sum_{t=0}^{T_k-1}
\Bigl(
\Delta_{\mathrm{sel},t}^k
+
\Delta_{\mathrm{bell},t}^{+,k}
+
L_Q\varepsilon_{\mathrm{pred}}
\Bigr).
\label{eq:episode_budget_bound}
\end{equation}
In particular, if $B_0^k=\delta$ and
$b_{V,0}^k(Z_0^{w,k},\delta)\ge0$, then
\begin{equation}
G_{c,0}(\tau_k)-\delta
\le
\sum_{t=0}^{T_k-1}
\Bigl(
\Delta_{\mathrm{sel},t}^k
+
\Delta_{\mathrm{bell},t}^{+,k}
+
L_Q\varepsilon_{\mathrm{pred}}
\Bigr).
\label{eq:episode_budget_bound_feasible}
\end{equation}
Averaging over $K$ episodes and taking expectation gives
\begin{equation}
\frac{1}{K}\sum_{k=1}^K
\mathbb{E}\bigl[G_{c,0}(\tau_k)\bigr]
\le
\delta
+
\frac{1}{K}\sum_{k=1}^K
\mathbb{E}\!\left[
\sum_{t=0}^{T_k-1}
\Bigl(
\Delta_{\mathrm{sel},t}^k
+
\Delta_{\mathrm{bell},t}^{+,k}
+
L_Q\varepsilon_{\mathrm{pred}}
\Bigr)
-
b_{V,0}^k(Z_0^{w,k},\delta)
\right].
\label{eq:aggregate_budget_bound}
\end{equation}
\end{proposition}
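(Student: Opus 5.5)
The plan is a telescoping argument: combine Theorem~\ref{thm:barrier_margin_propagation} with the selection gap of Definition~\ref{def:selection_gap}, sum over the episode, and use the terminal condition to close the sum. The first step is to rewrite the action barrier at time $t$ in terms of the state barrier. Since $b_{Q,t}^k(Z_t^{w,k},B_t^k,A_t^k)=B_t^k-\hat Q_C^+(Z_t^{w,k},A_t^k)$ and $\hat Q_C^+(Z_t^{w,k},A_t^k)=\hat V_{C,t}^{+,k}(Z_t^{w,k})+\Delta_{\mathrm{sel},t}^k$, we get the exact identity
\[
b_{Q,t}^k(Z_t^{w,k},B_t^k,A_t^k)=b_{V,t}^k(Z_t^{w,k},B_t^k)-\Delta_{\mathrm{sel},t}^k .
\]
Substituting this into Theorem~\ref{thm:barrier_margin_propagation} yields the one-step recursion
\[
b_{V,t+1}^k(Z_{t+1}^{w,k},B_{t+1}^k)\ge b_{V,t}^k(Z_t^{w,k},B_t^k)-\Delta_{\mathrm{sel},t}^k-\Delta_{\mathrm{bell},t}^{+,k}-L_Q\varepsilon_{\mathrm{pred}},
\]
valid for each $t=0,\dots,T_k-1$. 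This requires that the candidate set used to define $\hat V_{C,t+1}^{+,k}$ in the theorem (through the Bellman residual) is the same one used at the next decision, which is exactly the consistency convention stated before the proposition.

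Next, sum this recursion from $t=0$ to $T_k-1$. The state barriers telescope, leaving
\[
b_{V,T_k}^k\ge b_{V,0}^k-\sum_{t=0}^{T_k-1}\bigl(\Delta_{\mathrm{sel},t}^k+\Delta_{\mathrm{bell},t}^{+,k}+L_Q\varepsilon_{\mathrm{pred}}\bigr).
\]
By the terminal condition, $b_{V,T_k}^k=B_{T_k}^k$. The signed budget recursion (no clipping) gives $B_{T_k}^k=B_0^k-\sum_{t=1}^{T_k}C_t^k=B_0^k-G_{c,0}(\tau_k)$. Rearranging produces \eqref{eq:episode_budget_bound}. Setting $B_0^k=\delta$ and dropping the nonpositive term $-b_{V,0}^k$ when $b_{V,0}^k\ge 0$ gives \eqref{eq:episode_budget_bound_feasible}. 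Finally, \eqref{eq:aggregate_budget_bound} follows by taking expectations of \eqref{eq:episode_budget_bound} with $B_0^k=\delta$, averaging over $k$, and using linearity, with the $-b_{V,0}^k$ term kept inside the expectation.

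The main obstacle is bookkeeping rather than analysis, in two places. The first is that the recursion must use the signed budget, since any clipping of $B$ would break the identity $B_{T_k}=B_0-G_{c,0}$. The second is that the candidate sets must be aligned across consecutive steps and the terminal convention $\hat V_{C,T_k}^{+,k}=0$ must be consistent with the theorem at $t=T_k-1$. I would state explicitly that everything is conditional on the realized candidate-set sequence. I would also note that the expectation in the last display is well defined provided the summed error terms are integrable, which I would assume, for example via bounded costs and critics.
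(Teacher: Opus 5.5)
Your proposal is correct and matches the paper's proof step for step. You use the identity $b_{Q,t}^k=b_{V,t}^k-\Delta_{\mathrm{sel},t}^k$, substitute it into Theorem~\ref{thm:barrier_margin_propagation}, telescope over $t=0,\dots,T_k-1$, close with the terminal condition and the signed budget identity $B_{T_k}^k=B_0^k-G_{c,0}(\tau_k)$, and then specialize to $B_0^k=\delta$ and average over episodes; your extra remarks on conditioning on the realized candidate sets and on integrability are sound bookkeeping that the paper leaves to its stated conventions.
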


\begin{proof}
By the definition of the selection gap,
\[
\Delta_{\mathrm{sel},t}^k
=
\hat Q_C^+(Z_t^{w,k},A_t^k)
-
\hat V_{C,t}^{+,k}(Z_t^{w,k}).
\]
Therefore,
\begin{align*}
b_{Q,t}^k(Z_t^{w,k},B_t^k,A_t^k)
&=
B_t^k-\hat Q_C^+(Z_t^{w,k},A_t^k) \\
&=
B_t^k-\hat V_{C,t}^{+,k}(Z_t^{w,k})
-
\Delta_{\mathrm{sel},t}^k \\
&=
b_{V,t}^k(Z_t^{w,k},B_t^k)
-
\Delta_{\mathrm{sel},t}^k .
\end{align*}
Substituting this identity into
Theorem~\ref{thm:barrier_margin_propagation} gives
\begin{equation}
b_{V,t+1}^k(Z_{t+1}^{w,k},B_{t+1}^k)
\ge
b_{V,t}^k(Z_t^{w,k},B_t^k)
-
\Delta_{\mathrm{sel},t}^k
-
\Delta_{\mathrm{bell},t}^{+,k}
-
L_Q\varepsilon_{\mathrm{pred}} .
\label{eq:recursive_margin}
\end{equation}
Summing Equation~\eqref{eq:recursive_margin} over
$t=0,\dots,T_k-1$ and telescoping provides
\[
b_{V,T_k}^k(Z_{T_k}^{w,k},B_{T_k}^k)
\ge
b_{V,0}^k(Z_0^{w,k},B_0^k)
-
\sum_{t=0}^{T_k-1}
\Bigl(
\Delta_{\mathrm{sel},t}^k
+
\Delta_{\mathrm{bell},t}^{+,k}
+
L_Q\varepsilon_{\mathrm{pred}}
\Bigr).
\]
By the terminal condition
$\hat V_{C,T_k}^{+,k}(Z_{T_k}^{w,k})=0$, we have
\[
b_{V,T_k}^k(Z_{T_k}^{w,k},B_{T_k}^k)
=
B_{T_k}^k.
\]
Using the budget identity
\[
B_{T_k}^k
=
B_0^k-G_{c,0}(\tau_k),
\]
we obtain
\[
B_0^k-G_{c,0}(\tau_k)
\ge
b_{V,0}^k(Z_0^{w,k},B_0^k)
-
\sum_{t=0}^{T_k-1}
\Bigl(
\Delta_{\mathrm{sel},t}^k
+
\Delta_{\mathrm{bell},t}^{+,k}
+
L_Q\varepsilon_{\mathrm{pred}}
\Bigr).
\]
Finally, rearranging gives Equation~\eqref{eq:episode_budget_bound}.
Setting $B_0^k=\delta$ and using
$b_{V,0}^k(Z_0^{w,k},\delta)\ge0$ gives
Equation~\eqref{eq:episode_budget_bound_feasible}. Averaging
Equation~\eqref{eq:episode_budget_bound} over $k$ and taking expectations gives
Equation~\eqref{eq:aggregate_budget_bound}.
\end{proof}

\paragraph{Remark.}
Proposition~\ref{prop:episode_budget_bound} connects the local one-step barrier analysis to the aggregate budget objective through the selection gap $\Delta_{\mathrm{sel},t}^k$, which measures how much the chosen action's predicted cost exceeds the lowest predicted cost among candidates. For the soft shield, this term captures the cost of preserving policy support rather than always selecting the critic-greedy lowest-cost action.
The result does not by itself guarantee $\frac{1}{K}\sum_k \mathbb{E}[G_{c,0}(\tau_k)]\le\delta$. Instead, it decomposes episode-level budget into measurable residuals from action selection, critic conservatism, and latent prediction error.

\subsection{Overlap of Consecutive Safe Sets}
\label{apx:proof_lemma_overlap}

The following result gives a local interpretation of the Lipschitz regularity assumption. Since candidate
sets may be resampled over time, the statement is conditional on an action being present in both
consecutive candidate sets. 

\begin{definition}[Safe Candidate Set]
\label{def:nominal_safe_set_appendix}
For the realized candidate set $\mathcal A_{t,k}^{\mathrm{cand}}$, world-projected latent state
$Z\in\mathcal Z^w$, and remaining budget $B$, define
\begin{equation}
\mathcal A_{\mathrm{safe},t}^k(Z,B)
\doteq
\left\{
A\in \mathcal A_{t,k}^{\mathrm{cand}}:
\hat Q^+_C(Z,A)\le B
\right\}.
\label{eq:nominal_safe_set_appendix}
\end{equation}
\end{definition}

\begin{lemma}[Conditional Overlap of Consecutive Safe Candidate Sets]
\label{lem:safe_overlap_appendix}
Suppose Assumption~\ref{ass:q_regular} holds.
Let $A\in
\mathcal A_{t,k}^{\mathrm{cand}}
\cap
\mathcal A_{t+1,k}^{\mathrm{cand}}$
satisfy
\begin{equation}
\hat Q^+_C(Z_t^{w,k},A)
\le
B_t^k-\eta
\label{eq:slackened_constraint_appendix}
\end{equation}
for some margin $\eta>0$.
If
\begin{equation}
L_Q\|Z_{t+1}^{w,k}-Z_t^{w,k}\|_2
+
|B_{t+1}^k-B_t^k|
\le
\eta,
\label{eq:overlap_condition_appendix}
\end{equation}
then
\begin{equation}
A\in
\mathcal A_{\mathrm{safe},t+1}^k(Z_{t+1}^{w,k},B_{t+1}^k).
\end{equation}
Consequently, if such an action exists, then
\begin{equation}
\mathcal A_{\mathrm{safe},t}^k(Z_t^{w,k},B_t^k)
\cap
\mathcal A_{\mathrm{safe},t+1}^k(Z_{t+1}^{w,k},B_{t+1}^k)
\neq
\varnothing .
\end{equation}
\end{lemma}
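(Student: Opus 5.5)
The plan is a direct chain of inequalities. Since $A$ is assumed to lie in $\mathcal A_{t+1,k}^{\mathrm{cand}}$, membership in $\mathcal A_{\mathrm{safe},t+1}^k(Z_{t+1}^{w,k},B_{t+1}^k)$ reduces to the single scalar inequality $\hat Q^+_C(Z_{t+1}^{w,k},A)\le B_{t+1}^k$. To get it, I will move from the time-$t$ constraint \eqref{eq:slackened_constraint_appendix} to time $t+1$ by paying separately for the change in latent state and for the change in budget.

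\textbf{Latent-state term.} Assumption~\ref{ass:q_regular} applies to the fixed action $A$, which belongs to both candidate sets. It gives
\[
\hat Q^+_C(Z_{t+1}^{w,k},A)\le \hat Q^+_C(Z_t^{w,k},A)+L_Q\|Z_{t+1}^{w,k}-Z_t^{w,k}\|_2 .
\]
Combining this with \eqref{eq:slackened_constraint_appendix} yields
\[
\hat Q^+_C(Z_{t+1}^{w,k},A)\le B_t^k-\eta+L_Q\|Z_{t+1}^{w,k}-Z_t^{w,k}\|_2 .
\]

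\textbf{Budget term.} I will write $B_t^k\le B_{t+1}^k+|B_{t+1}^k-B_t^k|$. Substituting this into the previous bound and applying \eqref{eq:overlap_condition_appendix} gives
\[
\hat Q^+_C(Z_{t+1}^{w,k},A)\le B_{t+1}^k-\eta+\bigl(L_Q\|Z_{t+1}^{w,k}-Z_t^{w,k}\|_2+|B_{t+1}^k-B_t^k|\bigr)\le B_{t+1}^k .
\]
This is exactly the membership condition, so $A\in\mathcal A_{\mathrm{safe},t+1}^k(Z_{t+1}^{w,k},B_{t+1}^k)$. The budget recursion $B_{t+1}^k=B_t^k-C_{t+1}^k$ is not needed; the absolute-value bound covers it. It would only let one replace $|B_{t+1}^k-B_t^k|$ by $C_{t+1}^k$.

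\textbf{Consequence.} Since $\eta>0$, \eqref{eq:slackened_constraint_appendix} implies $\hat Q^+_C(Z_t^{w,k},A)\le B_t^k$. Because $A\in\mathcal A_{t,k}^{\mathrm{cand}}$, this places $A$ in $\mathcal A_{\mathrm{safe},t}^k(Z_t^{w,k},B_t^k)$. Together with the previous step, $A$ lies in the intersection of the two safe sets, so the intersection is nonempty.

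There is no real obstacle here. The only point needing care is that Lipschitz continuity is applied to the same action $A$ at both times. This is why the hypothesis requires $A$ to belong to both realized candidate sets: it makes both safe-set memberships well defined even when candidates are resampled in continuous-action domains.
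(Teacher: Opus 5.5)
Your proposal is correct and follows essentially the same route as the paper's proof. Both apply the Lipschitz bound to the fixed action $A$, absorb the budget change via $B_t^k - B_{t+1}^k \le |B_{t+1}^k - B_t^k|$ together with the overlap condition, and use $\eta>0$ to place $A$ in the time-$t$ safe set as well.
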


\begin{proof}
Because $A\in\mathcal A_{t,k}^{\mathrm{cand}}\cap\mathcal A_{t+1,k}^{\mathrm{cand}}$, it is eligible at both consecutive decision times.
By Assumption~\ref{ass:q_regular},
$$|\hat Q^+_C(Z_{t+1}^{w,k},A)-\hat Q^+_C(Z_t^{w,k},A)|
\le
L_Q\|Z_{t+1}^{w,k}-Z_t^{w,k}\|_2.$$
Using Equation \eqref{eq:slackened_constraint_appendix}, we have
$$\hat Q^+_C(Z_{t+1}^{w,k},A)
\le
B_t^k-\eta
+
L_Q\|Z_{t+1}^{w,k}-Z_t^{w,k}\|_2 .$$
Therefore,
$$\hat Q^+_C(Z_{t+1}^{w,k},A)-B_{t+1}^k
\le
L_Q\|Z_{t+1}^{w,k}-Z_t^{w,k}\|_2
-\eta
+
|B_{t+1}^k-B_t^k|.$$
By Equation \eqref{eq:overlap_condition_appendix}, the right-hand side is less or equal than $0$, so
$$\hat Q^+_C(Z_{t+1}^{w,k},A)\le B_{t+1}^k.$$
Since $A\in\mathcal A_{t+1,k}^{\mathrm{cand}}$, this implies
$A\in \mathcal A_{\mathrm{safe},t+1}^k(Z_{t+1}^{w,k},B_{t+1}^k).$
The overlap claim follows because Equation \eqref{eq:slackened_constraint_appendix} also implies
$A\in\mathcal A_{\mathrm{safe},t}^k(Z_t^{w,k},B_t^k)$.
\end{proof}
This overlap result is mainly relevant for discrete action spaces or continuous
implementations that reuse candidate actions across consecutive steps. With
independent continuous resampling, the exact overlap condition is less likely to hold.

\subsection{Pessimistic Ensemble Cost Critics}
\label{apx:bellman_details}

This section describes how the ensemble cost critics are trained and how the
pessimistic ensemble estimate used by the Q-barrier relates to the Bellman
upper-bound error $\Delta_{\mathrm{bell},t}^{+, k}$ in
Equation~\eqref{eq:bellman_error}.

\paragraph{Cost-value target.}
We do not parameterize the cost value by a separate value network. Instead, the
target value is induced by the target cost-Q ensemble. For each transition, we
sample $K_c$ next actions from the target policy, $a'_\ell \sim \pi_{\bar\theta}(\cdot \mid Z^{p,k}_{t+1}),
~
\ell=1,\ldots,K_c .$  We define the pessimistic target cost value as $\hat V_C(Z^{w,k}_{t+1})
=
\frac{1}{K_c}
\sum_{\ell=1}^{K_c}
\bar Q_{C, j}(Z^{w,k}_{t+1},a'_\ell),$
where $\bar Q_{C, j}$ denotes the target network for critic head $j$. The resulting
cost Bellman target is $Y_t^C
=
C_{t+1}
+
(1-d^{\mathrm{ctx}}_{t+1})
\hat V_C(Z^{w,k}_{t+1}),$
where $d^{\mathrm{ctx}}_{t+1}$ indicates an in-context episode boundary. The
target is computed using target networks and is detached from the critic update.

\paragraph{Critic training.}
We train an ensemble of $M$ online cost critics
$\{\hat Q_{C,i}\}_{i=1}^M$ by regressing each head to the same detached target: $\mathcal L_{Q_C}
=
\mathbb E_{i,t}
\left[
\left(
\hat Q_{C,i}(Z^{w,k}_t,A^k_t)-Y_t^C
\right)^2
\right].$
Thus, the loss is averaged over critic heads, while pessimism enters through the
detached target construction rather than through a direct maximization over
online critic predictions. In our experiments, we use $M=4$ cost critics, and $K_c=1$. 
Hence the target reduces to the maximum over
two randomly selected target critics evaluated at one target-policy action.

\paragraph{Runtime pessimism.}
At action selection time, the Q-barrier uses the ensemble maximum
$\hat Q_C^+(Z_t^{w,k},A)
\doteq
\max_{i\in\{1,\ldots,M\}}
\hat Q_{C,i}(Z_t^{w,k},A).$
The barrier margin is then computed as $b^k_{Q,t}(Z_t^{w,k},B_t^k,A)
=
B_t^k-\hat Q_C^+(Z_t^{w,k},A).$
This worst-case aggregation makes the shield conservative when critic heads
disagree, reducing the chance that an action is treated as safe because of a
single underestimated cost prediction.

\textbf{Effect on $\Delta_{\mathrm{bell},t}^k$.}
The pessimistic ensemble is a deployment-time heuristic intended to reduce $\Delta_{\mathrm{bell},t}^k$ empirically by biasing cost estimates conservatively in states where critic disagreement is large. We do not claim that it enforces Bellman conservatism theoretically. Instead, the role of the ensemble maximum is to make underestimation less likely in uncertain states, while its actual effect on $\Delta_{\mathrm{bell},t}^k$ is evaluated through Bellman-error diagnostics in Section~\ref{sec:experiments}.
Accordingly, this decoupled strategy may reduce the Bellman upper-bound error in practice, but it does not guarantee $\Delta_{\mathrm{bell},t}^k = 0$.
\clearpage
\section{Empirical Diagnostics for the Barrier-Margin Analysis}
\label{app:diagnostics}
\begin{table*}[t]
\centering
\resizebox{\textwidth}{!}{%
\begin{tabular}{lccccc}
\toprule
\textbf{Environment}
& $e_{\mathrm{pred}}$ $\downarrow$
& $e_V^+$ $\downarrow$
& $\widehat L_Q^{\mathrm{local},+}$ $\downarrow$
& Bellman sat. $\hat Q_C^+$ $\uparrow$
& $\Delta_{\mathrm{bell}}^+$ $\downarrow$ \\
\midrule
SafeDarkRoom
& $0.00252 \pm 0.00002$
& $0.3595 \pm 0.0056$
& $228.8 \pm 3.0$
& $72.2 \pm 1.0$
& $67.19 \pm 3.20$ \\
SafeDarkMujoco-Point
& $0.00766 \pm 0.00056$
& $0.0847 \pm 0.0053$
& $14.10 \pm 0.22$
& $89.8 \pm 0.9$
& $0.248 \pm 0.048$ \\
SafeDarkMujoco-Car
& $0.00906 \pm 0.00058$
& $0.0972 \pm 0.0061$
& $13.08 \pm 0.14$
& $89.8 \pm 0.4$
& $0.104 \pm 0.004$ \\
SafeVelocity-HalfCheetah
& $0.0460 \pm 0.0008$
& $0.0432 \pm 0.0009$
& $0.872 \pm 0.003$
& $70.5 \pm 0.2$
& $0.386 \pm 0.005$ \\
SafeVelocity-Ant
& $0.00478 \pm 0.00013$
& $0.0514 \pm 0.0019$
& $10.35 \pm 0.14$
& $89.9 \pm 0.5$
& $0.719 \pm 0.032$ \\
\bottomrule
\end{tabular}%
}
\caption{
\textbf{Theory-aligned diagnostics on OOD trajectories.}
We report empirical approximations for the quantities appearing in the barrier-margin
analysis. Bellman satisfaction is the percentage of transitions with
$\Delta_{\mathrm{bell},t}\le 10^{-6}$. $\hat Q_C^+$ denotes the pessimistic critic used by the deployed Q-barrier.
Residual magnitudes are reported in
critic-scaled cost units. SafetyHalfCheetah has the largest prediction error,
which is consistent with its more conservative reward-cost behavior under
Q-Barrier shielding.
}
\label{tab:theory_diagnostics_core}
\end{table*}
    
\begin{table*}[t]
\centering
\resizebox{\textwidth}{!}{%
\begin{tabular}{lcccc}
\toprule
\textbf{Environment}
& Bellman sat. mean $\uparrow$
& Bellman sat. $\hat Q_C^+$ $\uparrow$
& $\Delta_{\mathrm{bell}}^{\mathrm{mean}}$ $\downarrow$
& $\Delta_{\mathrm{bell}}^+$ $\downarrow$ \\
\midrule
SafeDarkRoom
& $66.7 \pm 1.1$
& $72.2 \pm 1.0$
& $169.0 \pm 8.2$
& $67.19 \pm 3.20$ \\
SafeDarkMujoco-Point
& $93.2 \pm 0.6$
& $89.8 \pm 0.9$
& $0.053 \pm 0.007$
& $0.248 \pm 0.048$ \\
SafeDarkMujoco-Car
& $78.6 \pm 0.5$
& $89.8 \pm 0.4$
& $11.0 \pm 0.51$
& $0.104 \pm 0.004$ \\
SafeVelocity-HalfCheetah
& $72.8 \pm 0.2$
& $70.5 \pm 0.2$
& $0.257 \pm 0.004$
& $0.386 \pm 0.005$ \\
SafeVelocity-Ant
& $88.6 \pm 0.5$
& $89.9 \pm 0.5$
& $1.089 \pm 0.038$
& $0.719 \pm 0.032$ \\
\bottomrule
\end{tabular}%
}
\caption{
\textbf{Effect of pessimistic critic aggregation on Bellman diagnostics.}
We use the same layout in Table~\ref{tab:theory_diagnostics_core}.
}
\label{tab:theory_diagnostics_bellman}
\end{table*}

The barrier-margin bound in Theorem~\ref{thm:barrier_margin_propagation}
depends on learned quantities that are not guaranteed by construction: one-step
latent prediction error, local sensitivity of the learned cost critic, and the
Bellman upper-bound residual of the critic. We therefore measure these
quantities directly on evaluation trajectories using the deployed soft
Q-barrier shield with $N_s=8$ candidate actions. In the main text, $\hat Q_C^+$ denotes the pessimistic ensemble critic used by
the deployed barrier. In this appendix, we also report diagnostics for the
ensemble mean critic $\hat Q_C^{\mathrm{mean}}$ to isolate the effect of
pessimistic aggregation. We omit $k$ for notational simplicity. 

For each selected transition $(Z_t^w,A_t,C_{t+1},Z_{t+1}^w)$, we compute the
one-step latent prediction error
\begin{equation}
e_{\mathrm{pred},t}
=
\left\|
f_z(Z_t^w,A_t)-Z_{t+1}^w
\right\|_2 .
\label{eq:diag_pred_error}
\end{equation}
To evaluate the value perturbation induced by this latent error, we use the
same next-step candidate set $\mathcal A_{t+1}^{\mathrm{cand}}$ for both the
predicted and realized next latent states:
\begin{equation}
\hat V_{C,t+1}^{+}(Z)
=
\min_{A'\in\mathcal A_{t+1}^{\mathrm{cand}}}
\hat Q_C^{+}(Z,A').
\label{eq:diag_value_def}
\end{equation}
The directly induced value perturbation is
\begin{equation}
e_{V,t}^{+}
=
\left|
\hat V_{C,t+1}^{+}\!\left(f_z(Z_t^w,A_t)\right)
-
\hat V_{C,t+1}^{+}\!\left(Z_{t+1}^w\right)
\right|.
\label{eq:diag_value_perturb}
\end{equation}
This is the most direct empirical analogue of the value perturbation term in
Lemma~\ref{lem:value_perturbation_grounding}.

We also report a finite-difference proxy for local critic sensitivity:
\begin{equation}
\widehat L_{Q,t}^{\mathrm{local},+}
=
\max_{A'\in\mathcal A_{t+1}^{\mathrm{cand}}}
\frac{
\left|
\hat Q_C^{+}(f_z(Z_t^w,A_t),A')
-
\hat Q_C^{+}(Z_{t+1}^w,A')
\right|
}
{
\left\|
f_z(Z_t^w,A_t)-Z_{t+1}^w
\right\|_2+\epsilon
},
\qquad
\epsilon=10^{-8}.
\label{eq:diag_local_lipschitz}
\end{equation}
This quantity should be interpreted only as a local finite-difference diagnostic:
it can become large when the latent prediction error in the denominator is very
small. For this reason, we use $e_{V,t}^{+}$ as the primary diagnostic for the
actual value-estimation perturbation entering the theorem.

Finally, we compute the Bellman upper-bound residual:
\begin{equation}
\Delta_{\mathrm{bell},t}^{+}
=
\left[
C_{t+1}
+
\hat V_{C,t+1}^{+}\!\left(f_z(Z_t^w,A_t)\right)
-
\hat Q_C^{+}(Z_t^w,A_t)
\right]_+ .
\label{eq:diag_bellman}
\end{equation}
The cost $C_{t+1}$ uses the same scaling as critic training. A
transition satisfies the learned Bellman upper-bound condition when
$\Delta_{\mathrm{bell},t}^{+}\le10^{-6}$. We also compute the same residual for
the ensemble mean critic $\hat Q_C^{\mathrm{mean}}$ to isolate the
effect of pessimistic aggregation.

All diagnostics are computed after action selection and are not used by the
policy or shield during deployment. We first aggregate metrics within each
evaluation rollout, using $50$ tasks, and then report mean $\pm$ standard errors across rollouts.

\begin{figure*}[t]
    \centering
    \includegraphics[width=1.0\linewidth]{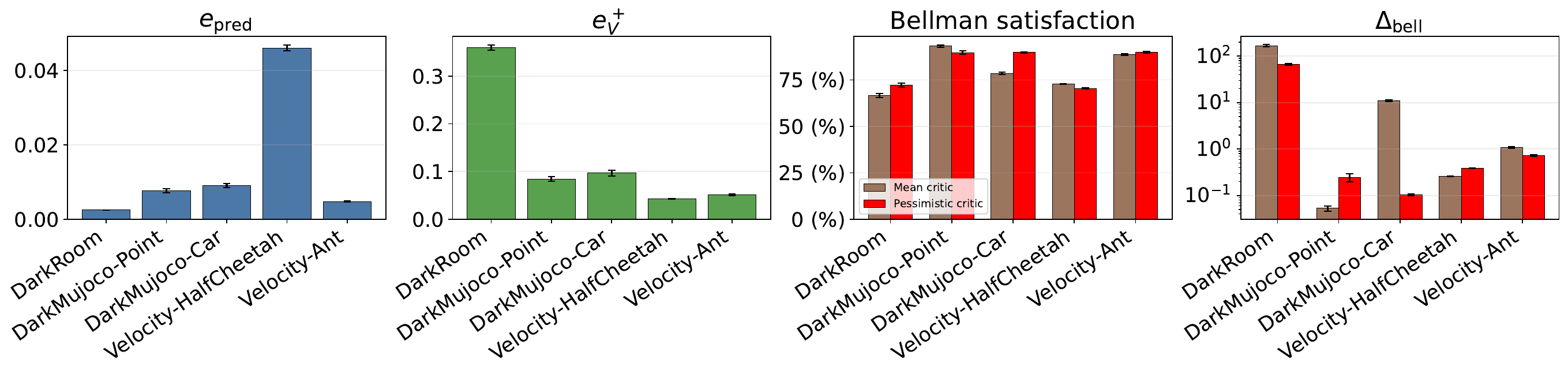}
    \caption{
    \textbf{Theorem-aligned diagnostics on OOD evaluation trajectories.}
    We report one-step latent prediction error, induced value perturbation,
    Bellman upper-bound satisfaction, and positive Bellman residual. Error bars
    show standard error over rollout-level summaries. We compare the
    ensemble mean critic with the pessimistic critic used by the deployed
    Q-barrier. Bellman residuals are plotted on a log scale because their
    magnitudes differ across critic scales and environments.
    }
    \label{fig:theory_diagnostics}
\end{figure*}

Table~\ref{tab:theory_diagnostics_core} shows that the theorem-relevant
quantities vary across environments. SafeHalfCheetah has
the largest latent prediction error, which is consistent with the more
conservative behavior observed in the main experiments. SafeDarkRoom has a small
latent prediction error but a large local finite-difference sensitivity and a
large Bellman residual in critic-scaled units, indicating that its learned critic
changes sharply even under small latent perturbations. This is why we interpret
$\widehat L_Q^{\mathrm{local},+}$ as a diagnostic rather than as a verified
global Lipschitz constant.

Table~\ref{tab:theory_diagnostics_bellman} isolates the effect of pessimistic
critic aggregation on the Bellman upper-bound residual. The pessimistic critic
improves Bellman satisfaction or reduces residual magnitude in SafeDarkRoom,
SafetyCar, and SafetyAnt. The effect is not uniform: in
SafetyPoint and SafetyHalfCheetah, pessimistic aggregation
slightly lowers Bellman satisfaction and increases the positive residual. This
is expected because max aggregation increases predicted cost and therefore acts
as a deployment-time safety bias, but it does not by itself enforce Bellman
upper-bound consistency on every transition.

These diagnostics should therefore be read as residual measurements for the
bound in Theorem~\ref{thm:barrier_margin_propagation}, not as independent safety
certificates. When $\Delta_{\mathrm{bell}}$ is small and Bellman satisfaction is
high, the theorem's residual term is tighter. When the residual is larger, the
barrier-margin statement becomes correspondingly weaker. This distinction is
important for SafetyHalfCheetah, where weaker Bellman diagnostics and
larger latent prediction error are consistent with the more conservative
reward-cost behavior observed in the main experiments.
\section{Hard Shield Variant}
\label{app:hard_shield}

The main empirical results use the soft shield from
Equation~\eqref{eq:soft_shield}, which preserves support over the finite
candidate set while downweighting actions with negative Q-barrier margin. Here,
we report a corresponding hard truncation variant. The hard shield restricts
action selection to the barrier-feasible subset of the candidate set whenever
that subset is nonempty.

For the realized finite candidate set $\mathcal A_{t,k}^{\mathrm{cand}}$, define
the hard safe set
\begin{equation}
\mathcal A_{\mathrm{safe},t}^k(Z_t^{w,k},B_t^k)
\doteq
\left\{
A\in\mathcal A_{t,k}^{\mathrm{cand}}:
b^k_{Q,t}(Z_t^{w,k},B_t^k,A)\ge 0
\right\}
=
\left\{
A\in\mathcal A_{t,k}^{\mathrm{cand}}:
\hat Q_C^+(Z_t^{w,k},A)\le B_t^k
\right\}.
\label{eq:nominal_safe_set_appendix}
\end{equation}
As in the soft shield, define the finite-candidate base weight
\[
\rho_t(A)
=
\begin{cases}
\pi_\theta(A\mid Z_t^{p,k}),
& \text{if the discrete action space is enumerated},\\
1,
& \text{if } A\in\mathcal A_{t,k}^{\mathrm{cand}}
\text{ is sampled from } \pi_\theta(\cdot\mid Z_t^{p,k}).
\end{cases}
\]
The hard-shield policy over the finite candidate set is
\begin{equation}
\pi_{\mathrm{hard}}(A \mid S_t^k,Z_t^k,B_t^k)
=
\begin{cases}
\dfrac{
\rho_t(A)\,
\mathbb{I}\!\left[
A\in\mathcal A_{\mathrm{safe},t}^k(Z_t^{w,k},B_t^k)
\right]
}{
\sum_{A'\in\mathcal A_{t,k}^{\mathrm{cand}}}
\rho_t(A')\,
\mathbb{I}\!\left[
A'\in\mathcal A_{\mathrm{safe},t}^k(Z_t^{w,k},B_t^k)
\right]
},
& \text{if the denominator$>0$}, \\[2.2ex]
\dfrac{
\mathbb{I}\!\left[
A\in\mathcal A_{\min,t}^k
\right]
}{
|\mathcal A_{\min,t}^k|
},
& \text{otherwise},
\end{cases}
\label{eq:hard_shield}
\end{equation}
where
\begin{equation}
\mathcal A_{\min,t}^k
\doteq
\arg\min_{A'\in\mathcal A_{t,k}^{\mathrm{cand}}}
\hat Q_C^+(Z_t^{w,k},A').
\label{eq:hard_shield_fallback}
\end{equation}

If the safe candidate set is nonempty, the hard shield renormalizes the
finite-candidate base weights over barrier-feasible candidates only. If the safe
set is empty, it falls back to the candidate with the smallest pessimistic
predicted cost, with uniform tie-breaking if multiple candidates attain the
minimum.

In discrete-action settings, $\rho_t(A)$ is the base-policy probability mass, so
Equation~\eqref{eq:hard_shield} is a standard truncation and renormalization of
the backbone policy. In continuous-action settings, the candidate actions are
already sampled from the base policy, so $\rho_t(A)=1$ and the hard shield is a
finite-sample rejection/resampling rule over the sampled candidates. It should
therefore be interpreted as a hard filter on the sampled candidate set, not as a
truncation of the full continuous policy density.

\clearpage
\section{Ablation: Hard Q-Barrier vs.\ Soft Q-Barrier}
\label{app:ablation_shield_type}
The main experiments use the soft Q-barrier shield from
Equation~\eqref{eq:soft_shield}. The soft shield preserves support over the
finite candidate set by exponentially downweighting actions with negative
barrier margin. In contrast, the hard shield from
Equation~\eqref{eq:hard_shield} truncates the candidate distribution to actions
that satisfy the learned barrier whenever such actions exist, and otherwise
falls back to the candidate with the smallest pessimistic predicted cost. Both
variants are deployment-time action-selection rules and use the same pretrained
model, base policy, encoder, latent dynamics model, and pessimistic cost critic.
\begin{figure*}[t]
    \centering
    \begin{subfigure}[t]{0.8\linewidth}
        \centering
        \includegraphics[width=1.0\linewidth]{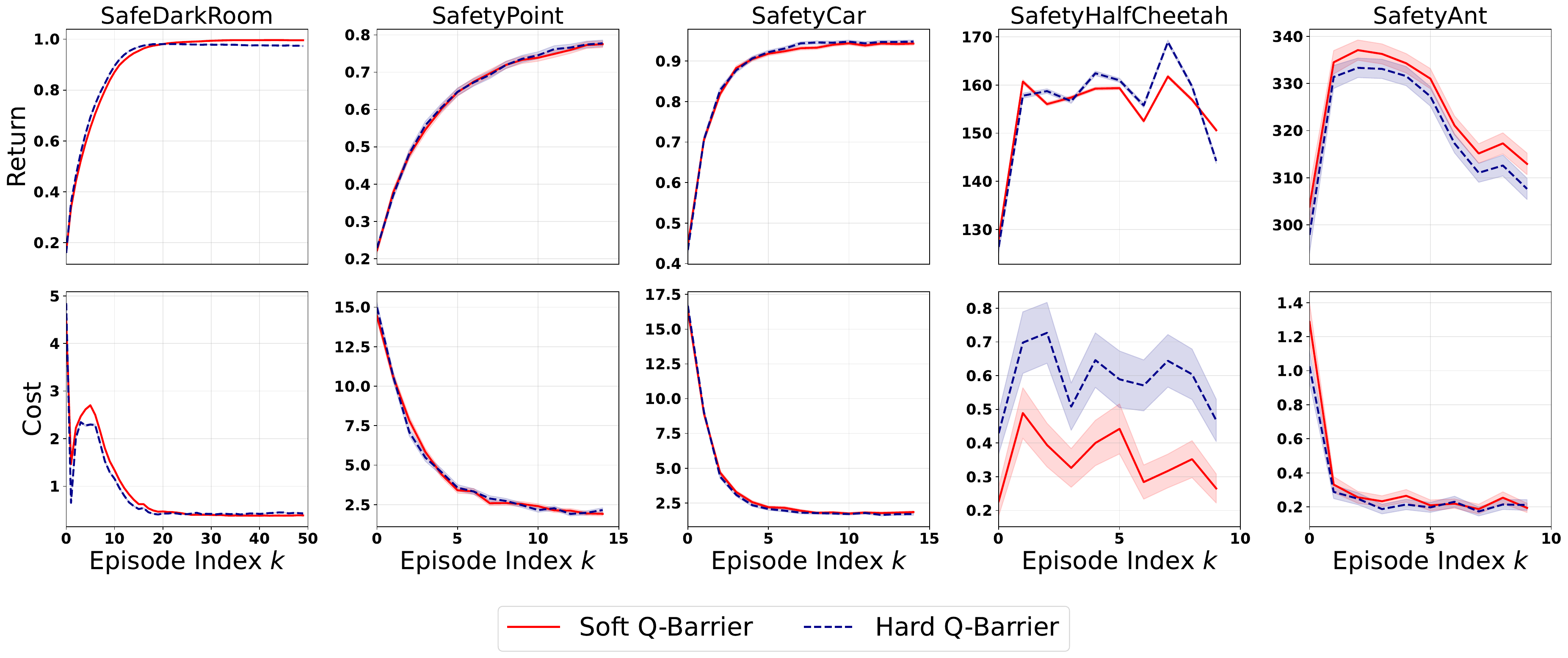}
        \caption{In-context adaptation dynamics
    }
        \label{fig:ablation_sampling_soft}
    \end{subfigure}

    \vspace{0.5em}

    \begin{subfigure}[t]{0.8\linewidth}
        \centering
        \includegraphics[width=1.0\linewidth]{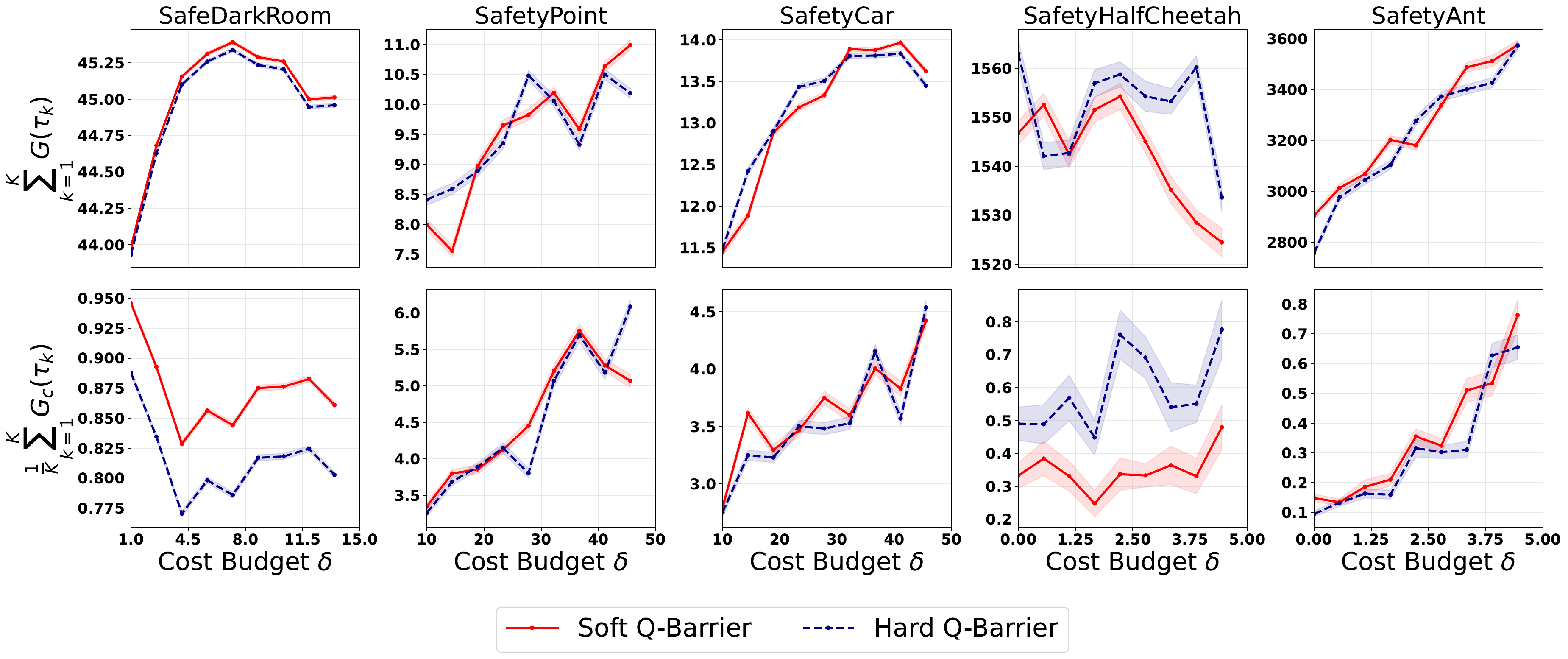}
        \caption{Reward-cost tradeoffs under budget variation}
        \label{fig:ablation_sampling_hard}
    \end{subfigure}

        \caption{
        \textbf{Soft versus hard Q-barrier shielding.}
    We compare the soft shield used in the main experiments with a hard
    truncation variant that selects only from barrier-feasible candidates when
    possible. The two variants have similar reward--safety profiles in most
    environments. Hard shielding gives marginal cost reductions in several
    environments, such as SafeDarkRoom, but these gains can come with small
    return loss and are not uniform. In SafetyHalfCheetah, hard shielding
    achieves the highest return but also increases cost, consistent with the
    larger latent prediction error reported in
    Table~\ref{tab:theory_diagnostics_core}. This suggests that hard truncation
    is not automatically safer when the learned barrier is affected by model or
    critic error.
    }
    \label{fig:ablation_hard_soft}
\end{figure*}
We evaluate both shield types on the OOD tasks from the main experiments. The
only difference between the two variants is the deployment-time action-selection
rule: soft reweighting versus hard truncation. All other components are shared.

Figure~\ref{fig:ablation_hard_soft} shows that hard and soft shielding perform
similarly overall. In SafeDarkRoom, hard shielding slightly reduces cost relative
to soft shielding, but with a small return drop, suggesting that the soft shield
already assigns most probability to low-risk candidates.

SafetyHalfCheetah is the main exception: hard shielding achieves higher return
than soft shielding, but also increases cost. This is consistent with the
diagnostics in Table~\ref{tab:theory_diagnostics_core}, where SafetyHalfCheetah
has the largest latent prediction error. With less accurate latent prediction,
hard thresholding can be less stable: small barrier errors may flip candidates
between safe and unsafe, and fallback or truncation can amplify these mistakes.
Soft shielding is smoother because it changes candidate probabilities
continuously with the predicted barrier margin.

Overall, this ablation supports using the soft shield in the main experiments.
Hard shielding can slightly reduce cost in some cases, but the gains are small
and inconsistent. Soft shielding provides a more stable deployment-time
tradeoff by biasing selection toward lower predicted cost while avoiding the
discontinuities and empty-safe-set fallback behavior of hard filtering.

\clearpage
\section{Number of Candidate Actions in Continuous Control}
\label{app:ablation_sampling}
The Q-barrier shield evaluates a finite candidate set before selecting an action. In discrete-action environments this set is the full action space, but in continuous-action environments the shield approximates the budget-aware action distribution using $N_s$ actions sampled from the base policy. This ablation studies how the number of sampled candidates affects the reward-safety tradeoff.

We evaluate $N_s \in \{4,8,16,32\}$ on the continuous-control environments including SafeDarkMujoco and SafeVelocity domains. 
For each $N_s$, we compare the soft shield used in the main experiments with a hard shield that rejects candidate actions whose learned Q-barrier is negative whenever possible. 
\begin{figure*}[t]
    \centering
    \begin{subfigure}[t]{0.8\linewidth}
        \centering
        \includegraphics[width=1.0\linewidth]{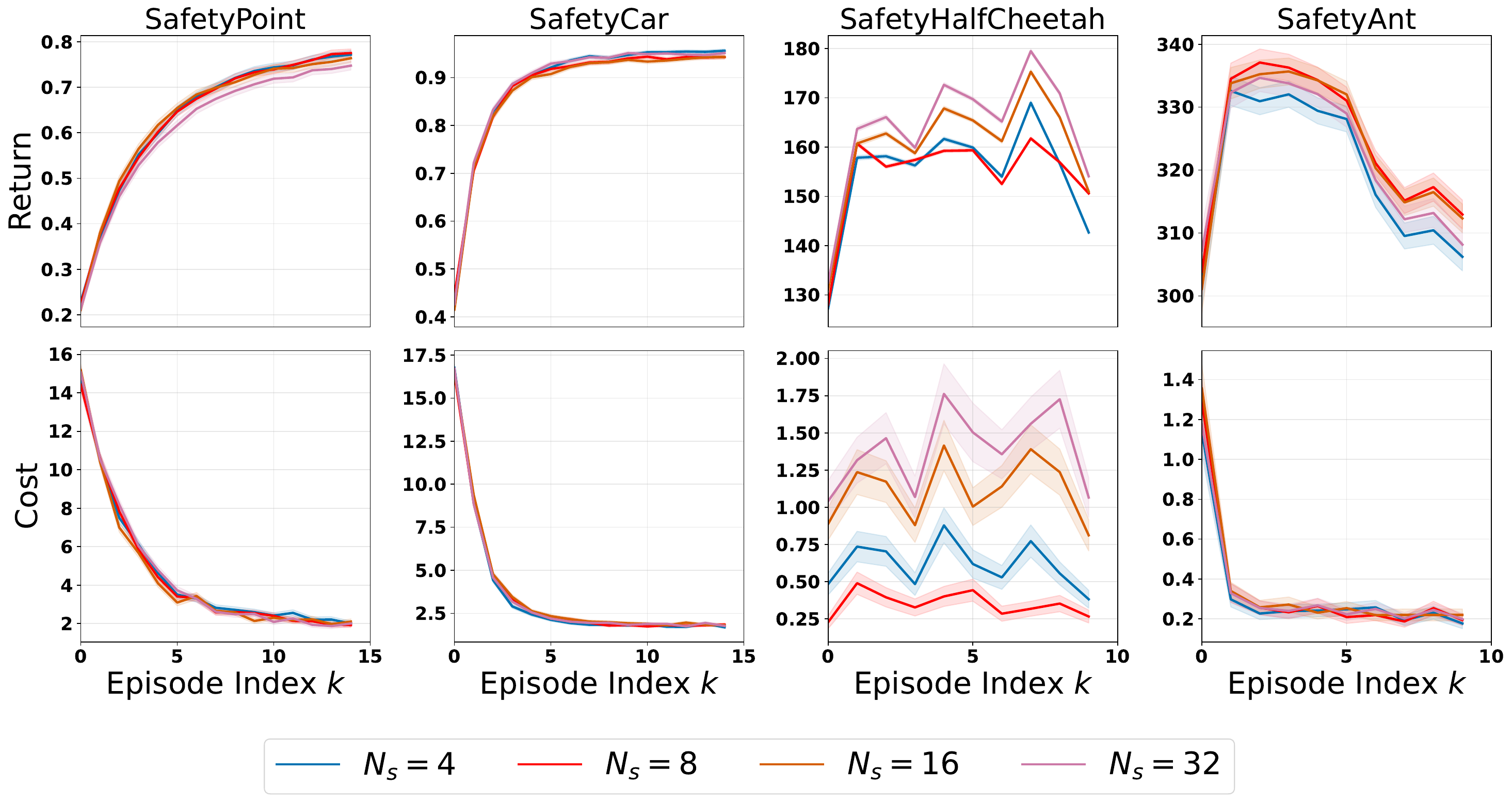}
        \caption{Soft Q-Barrier}
        \label{fig:ablation_sampling_soft}
    \end{subfigure}

    \vspace{0.5em}

    \begin{subfigure}[t]{0.8\linewidth}
        \centering
        \includegraphics[width=1.0\linewidth]{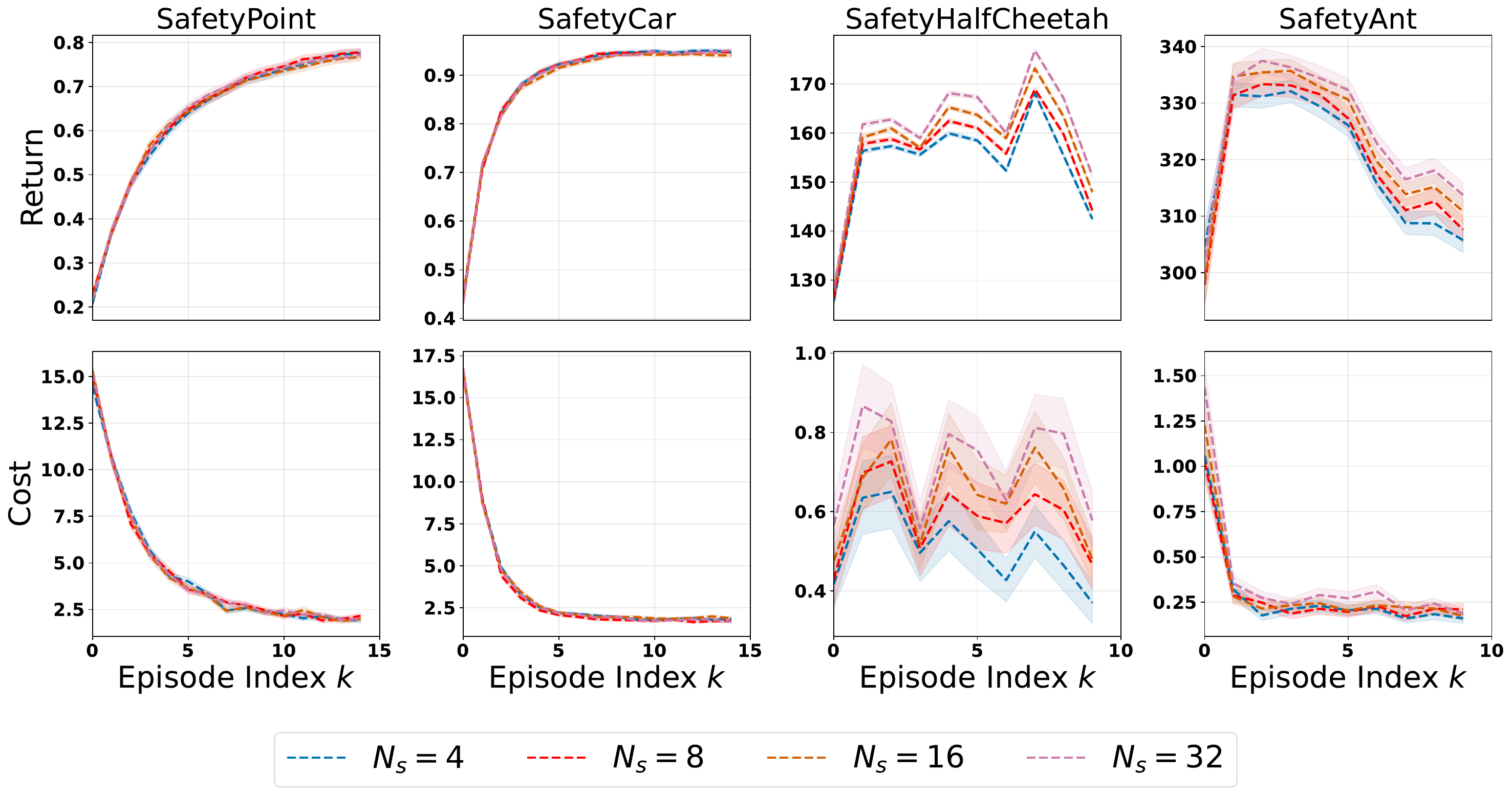}
        \caption{Hard Q-Barrier}
        \label{fig:ablation_sampling_hard}
    \end{subfigure}

    \caption{
    \textbf{Effect of candidate-set size on OOD adaptation dynamics.}
    We vary the number of sampled candidate actions
    $N_s \in \{4,8,16,32\}$ used by our Q-barrier shield during deployment.
    \textbf{(a)} soft filtering, which reweights candidates by their
    barrier values, and \textbf{(b)} hard filtering, which removes candidates that
    fail the barrier test. Curves show per-episode return and cost over the
    in-context OOD evaluation horizon, with shaded regions denoting standard
    errors across rollouts. 
    }
    \label{fig:ablation_sampling}
\end{figure*}

Figure~\ref{fig:ablation_sampling} studies the effect of the number of sampled
candidate actions $N_s$ under soft and hard Q-barrier shielding. The sensitivity
is environment dependent. In SafeDarkMujoco, both return and cost are 
stable across $N_s$ for both filtering rules. Hard filtering (Figure~\ref{fig:ablation_sampling_hard}) is especially
insensitive because it removes candidates that fail the learned barrier test, so
larger candidate sets mainly affect which feasible action is selected. 
Soft filtering (Figure~\ref{fig:ablation_sampling_soft}) instead reweights all sampled candidates according to their
barrier values, making the selected action more susceptible to additional
high-return but higher-cost samples.

The SafeVelocity environments show a stronger sampling effect. In
SafetyHalfCheetah, increasing $N_s$ improves return during adaptation, but also
increases cost. 
This reward-cost tradeoff is more ordered under hard filtering:
larger candidate sets consistently increase return, at the expense of higher
cost, in both SafetyHalfCheetah and SafetyAnt. 
Under soft filtering, the ordering
is less consistent because unsafe or near-unsafe candidates are downweighted
rather than removed, leaving more dependence on the learned barrier scale. 
The stronger sensitivity in locomotion is expected, since these
tasks have higher-dimensional action spaces and a finite candidate set more
directly limits the shield's ability to find high-return safe actions. Overall, the ablation supports using $N_s=8$ as a conservative default in the
main experiments. We do not tune $N_s$ per environment or report the best
sampling number.

\section{Pseudocode}
\label{app:algorithms}
We provide pseudocode for the training procedure and the deployment-time action-selection rule used by the proposed shielding framework.
Algorithm~\ref{alg:train_shielding} summarizes the training pipeline, including the base actor-critic update together with the auxiliary latent/world-model objectives.
Algorithm~\ref{alg:runtime_shielding} summarizes the runtime Q-barrier shielding rule used at deployment.

\begin{algorithm}[h]
\caption{Training Procedure}
\label{alg:train_shielding}
\begin{algorithmic}[1]
\Require Batch of history-conditioned trajectories $\mathcal{B}$
\State Initialize shared encoder $E_\phi$, policy head $\pi_\theta$,
reward critic ensemble $\{\hat Q_{R,i}\}_{i=1}^M$, cost critic ensemble
$\{\hat Q_{C,i}\}_{i=1}^M$, and target copies
$\bar\pi_{\bar\theta}, \{\bar Q_{R,i}\}_{i=1}^M, \{\bar Q_{C,i}\}_{i=1}^M$
\State Initialize projection heads $g_{\omega}^{\mathrm{world}}$,
$g_{\psi}^{\mathrm{policy}}$
\State Initialize latent dynamics model $p_z$, reward head $\hat R$, and
cost head $\hat C$

\For{each training update}
    \State Sample batch $\mathcal{B}$ of padded trajectories indexed by $(k,t)$
    \State Encode transitions: $Z_t^k \gets E_\phi(H_t^k,S_t^k)$ and
    $Z_{t+1}^k \gets E_\phi(H_{t+1}^k,S_{t+1}^k)$
    \State Project to world and policy latents:
    \Statex \hspace{2em}
    $Z_t^{w,k} \gets g_{\omega}^{\mathrm{world}}(Z_t^k)$,\quad
    $Z_{t+1}^{w,k} \gets g_{\omega}^{\mathrm{world}}(Z_{t+1}^k)$
    \Statex \hspace{2em}
    $Z_t^{p,k} \gets g_{\psi}^{\mathrm{policy}}(Z_t^k)$,\quad
    $Z_{t+1}^{p,k} \gets g_{\psi}^{\mathrm{policy}}(Z_{t+1}^k)$

    \State Compute policy outputs $\pi_\theta(\cdot\mid Z_t^{p,k})$
    \State Compute world-model outputs:
    \Statex \hspace{2em}
    $f_z(Z_t^{w,k},A_t^k)$,\quad predicted reward $\hat R_{t+1}^k$,\quad
    predicted cost $\hat C_{t+1}^k$

    \State Sample target action
    $A'_{t+1} \sim \bar\pi_{\bar\theta}(\cdot\mid Z_{t+1}^{p,k})$
    \State Compute detached Bellman targets:
    \Statex \hspace{2em}
    $Y^R_t \gets R_{t+1}^k
    + \gamma(1-D_{t+1}^k)
    \frac{1}{M} \sum
    \bar Q_{R,j}(Z_{t+1}^{k},A'_{t+1})$
    \Statex \hspace{2em}
    $Y^C_t \gets C_{t+1}^k
    + \gamma(1-D^{\mathrm{ctx}}_{t+1})
    \frac{1}{M}\sum 
    \bar Q_{C,j}(Z_{t+1}^{w,k},A'_{t+1})$

    \State Compute critic loss:
    \Statex \hspace{2em}
    $\mathcal{L}_{\mathrm{critic}}
    \gets
    \frac{1}{M}\sum_{i=1}^{M}
    \left[
    \left(\hat Q_{R,i}(Z_t^k,A_t^k)-Y^R_t\right)^2
    +
    \left(\hat Q_{C,i}(Z_t^{w,k},A_t^k)-Y^C_t\right)^2
    \right]$

    \State Compute actor loss:
    \Statex \hspace{2em}
    $\mathcal{L}_{\mathrm{actor}}
    \gets
    \mathcal{L}_{\mathrm{DPG}}
    +
    \alpha_{\mathrm{BC}}\mathcal{L}_{\mathrm{AWBC}}
    +
    \lambda_C \mathcal{L}_{C}^{\pi}$

    \State Compute world-model loss $\mathcal{L}_{\mathrm{wm}}$
    \State Compute distillation loss $\mathcal{L}_{\mathrm{distill}}^{\mathrm{sg}}$
    \State Compute conjugacy loss $\mathcal{L}_{\mathrm{conj}}^{\mathrm{sg}}$

    \State $\mathcal{L}_{\mathrm{total}} \gets
    \mathcal{L}_{\mathrm{actor}}
    +
    10.0\,\mathcal{L}_{\mathrm{critic}}
    +
    1.0\,\mathcal{L}_{\mathrm{wm}}
    +
    0.1\,\mathcal{L}_{\mathrm{distill}}^{\mathrm{sg}}
    +
    0.1\,\mathcal{L}_{\mathrm{conj}}^{\mathrm{sg}}$

    \State Update trainable parameters using $\nabla \mathcal{L}_{\mathrm{total}}$
    \State Update target networks by Polyak averaging
\EndFor
\end{algorithmic}
\end{algorithm}
Here $\mathcal{L}_{\mathrm{DPG}}$ is the online policy-improvement loss,
$\mathcal{L}_{\mathrm{AWBC}}$ is the advantage-filtered behavioral cloning loss,
and $\mathcal{L}_{C}^{\pi}$ is the Lagrangian cost penalty computed from the
pessimistic cost critic. We use $M=4$ critics, one target
action sample, and $\alpha_{\mathrm{BC}}=0.1$. Bellman targets are computed with
Polyak-averaged target networks and detached from the critic update.

\textbf{Remark. }
SCARED uses the same base policy training pipeline as Q-Barrier, including the
actor/critic updates, target networks, target-conditioning protocol, and training
schedule. The difference is that Q-Barrier augments this backbone with
shield-specific modules and losses: the world/policy projections, latent
dynamics model, reward/cost prediction heads, structural
alignment losses, and the runtime Q-barrier action-selection rule.

\begin{algorithm}[h]
\caption{Q-Barrier Shielding for Finite Candidate Action Selection}
\label{alg:runtime_shielding}
\begin{algorithmic}[1]
\Require State $S_t^k$, context $H_t^k$, budget $\delta$, frozen policy $\pi_{\theta_*}$, ensemble critics $\{\hat Q_{C,i}\}_{i=1}^N$, shield type $q\in\{\mathrm{soft},\mathrm{hard}\}$
\State Encode shared latent: $Z_t^k \gets E_\phi(H_t^k,S_t^k)$
\State Project latents: $Z_t^{w,k} \gets g_\omega^{\mathrm{world}}(Z_t^k)$,\;
$Z_t^{p,k} \gets g_\psi^{\mathrm{policy}}(Z_t^k)$
\State Compute remaining budget:
$B_t^k \gets \delta-\sum_{\tau=1}^{t} C_\tau^k$

\If{action space is discrete}
    \State $\mathcal A_{t,k}^{\mathrm{cand}} \gets \mathcal A$
    \State $\rho_t(A) \gets \pi_{\theta_*}(A\mid Z_t^{p,k})$ for each $A\in\mathcal A_{t,k}^{\mathrm{cand}}$
\Else
    \State Sample $\mathcal A_{t,k}^{\mathrm{cand}}=\{A^{(j)}\}_{j=1}^{N_s}$ from $\pi_{\theta_*}(\cdot\mid Z_t^{p,k})$
    \State $\rho_t(A) \gets 1$ for each $A\in\mathcal A_{t,k}^{\mathrm{cand}}$
\EndIf

\For{each $A\in\mathcal A_{t,k}^{\mathrm{cand}}$}
    \State $\hat Q_C^+(Z_t^{w,k},A) \gets \max_i \hat Q_{C,i}(Z_t^{w,k},A)$
    \State $b^k_{Q,t}(Z_t^{w,k},B_t^k,A)
    \gets B_t^k-\hat Q_C^+(Z_t^{w,k},A)$
\EndFor

\If{$q=\mathrm{soft}$}
    \For{each $A\in\mathcal A_{t,k}^{\mathrm{cand}}$}
        \State $w_t(A)\gets
        \rho_t(A)\exp\!\left(-[-b^k_{Q,t}(Z_t^{w,k},B_t^k,A)]_+\right)$
    \EndFor
    \State Normalize:
    $\pi_{\mathrm{soft}}(A)\gets
    w_t(A)\big/\sum_{A'\in\mathcal A_{t,k}^{\mathrm{cand}}}w_t(A')$
    \State \Return $A_t^k\sim \pi_{\mathrm{soft}}$
\Else
    \State $\mathcal A_{\mathrm{safe},t}^k
    \gets
    \{A\in\mathcal A_{t,k}^{\mathrm{cand}}:
    b^k_{Q,t}(Z_t^{w,k},B_t^k,A)\ge 0\}$

    \If{$\mathcal A_{\mathrm{safe},t}^k\neq\emptyset$}
        \For{each $A\in\mathcal A_{t,k}^{\mathrm{cand}}$}
            \State $w_t(A)\gets
            \rho_t(A)\mathbb{I}[A\in\mathcal A_{\mathrm{safe},t}^k]$
        \EndFor
        \State Normalize:
        $\pi_{\mathrm{hard}}(A)\gets
        w_t(A)\big/\sum_{A'\in\mathcal A_{t,k}^{\mathrm{cand}}}w_t(A')$
        \State \Return $A_t^k\sim \pi_{\mathrm{hard}}$
    \Else
        \State $\mathcal A_{\min,t}^k
        \gets
        \arg\min_{A\in\mathcal A_{t,k}^{\mathrm{cand}}}
        \hat Q_C^+(Z_t^{w,k},A)$
        \State \Return $A_t^k\sim \mathrm{Uniform}(\mathcal A_{\min,t}^k)$
        \Comment{lowest predicted-cost fallback}
    \EndIf
\EndIf
\end{algorithmic}
\end{algorithm}
\paragraph{Notes on presentation.}
Algorithm~\ref{alg:train_shielding} expands the base history-conditioned actor-critic forward path, making the interaction between policy optimization and auxiliary shielding objectives explicit.
Algorithm~\ref{alg:runtime_shielding} is the deployment-time rule used by the proposed method: the shield encodes the current state and context into shared and projected latents, computes the remaining budget, evaluates candidate actions with a pessimistic ensemble cost critic, and biases action selection toward actions with nonnegative barrier margin.

\section{Environment Details}
\label{app:env_details}
\begin{figure*}[h]
\centering
\begin{tikzpicture}
  \matrix (m) [matrix of nodes,
             nodes={inner xsep=5pt, inner ysep=4pt, outer sep=0, anchor=center},
             row sep=8mm] {
    \includegraphics[width=.25\linewidth]{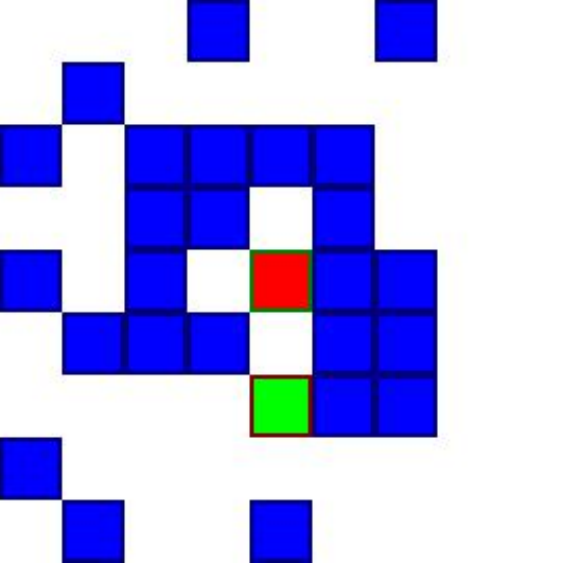} &
    \includegraphics[width=.25\linewidth]{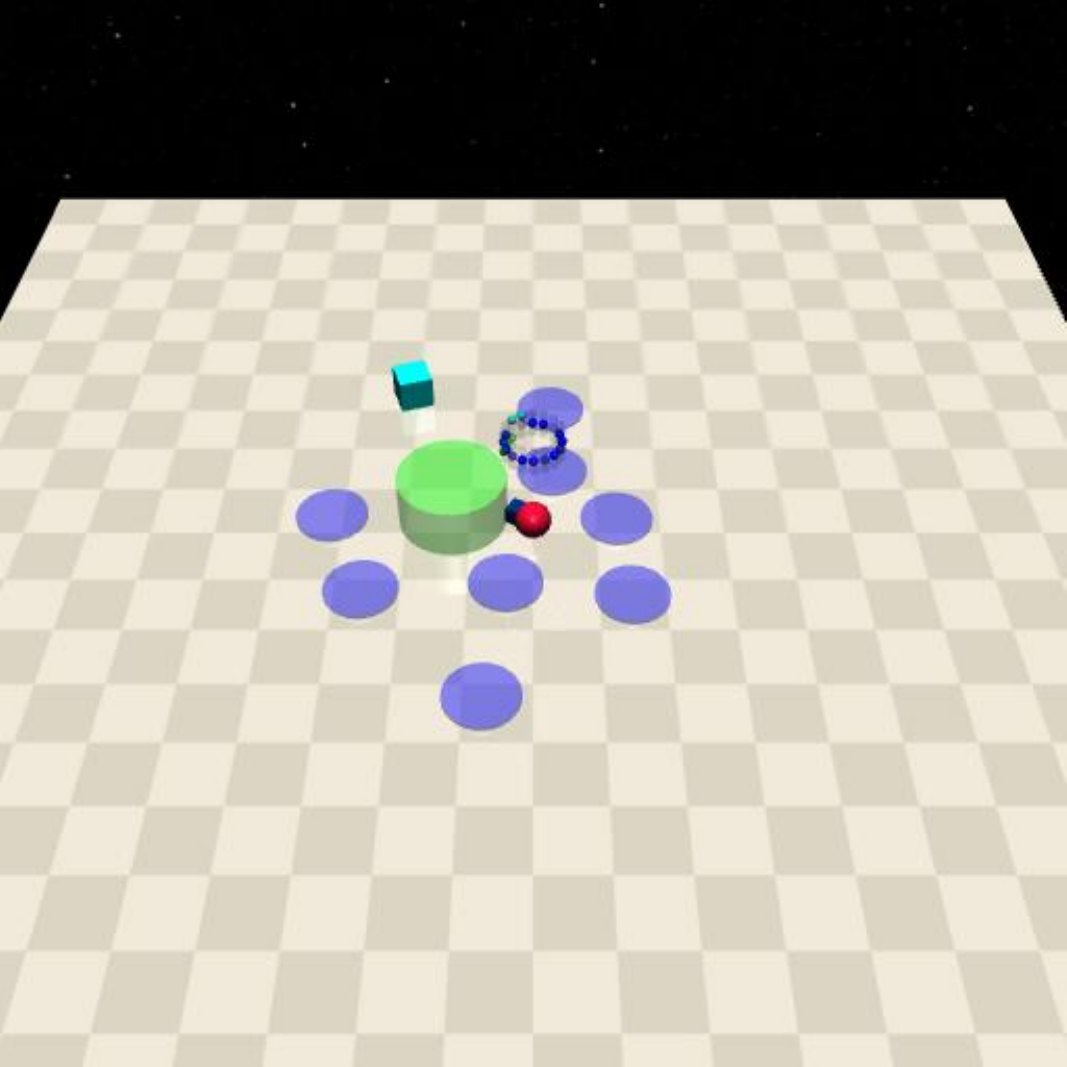} &
    \includegraphics[width=.25\linewidth]{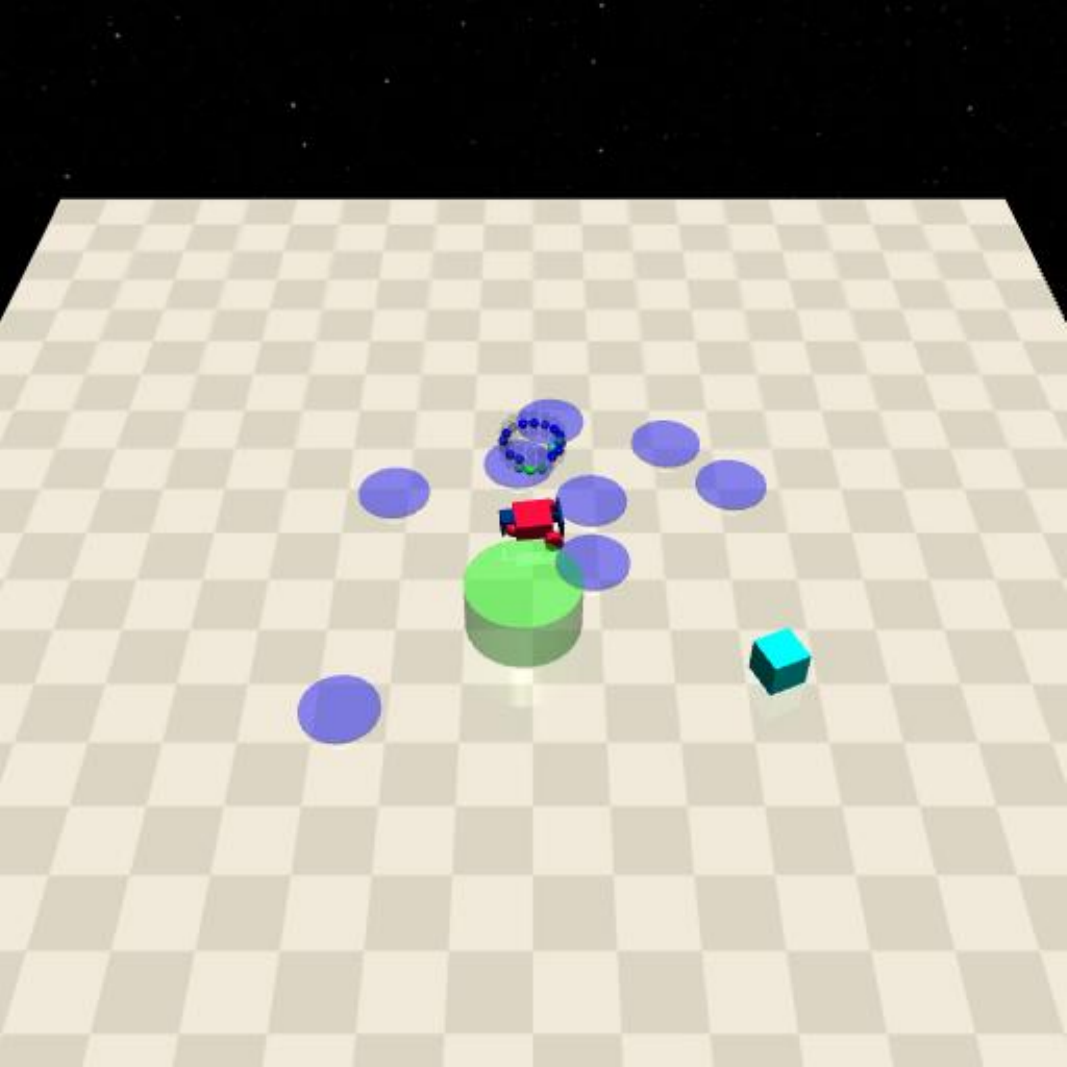} \\
    \includegraphics[width=.25\linewidth]{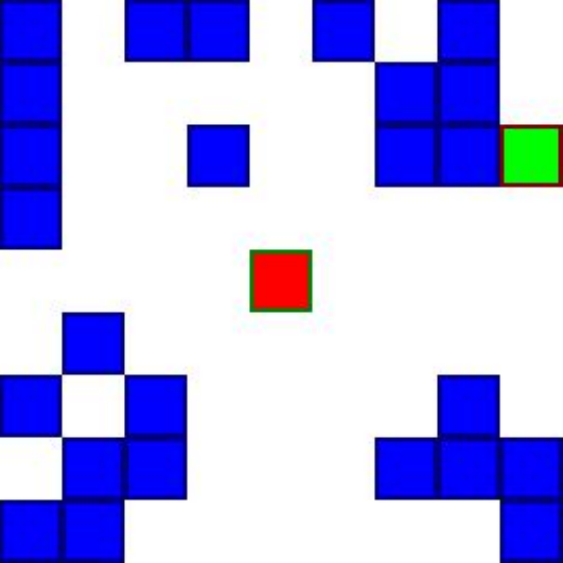} &
    \includegraphics[width=.25\linewidth]{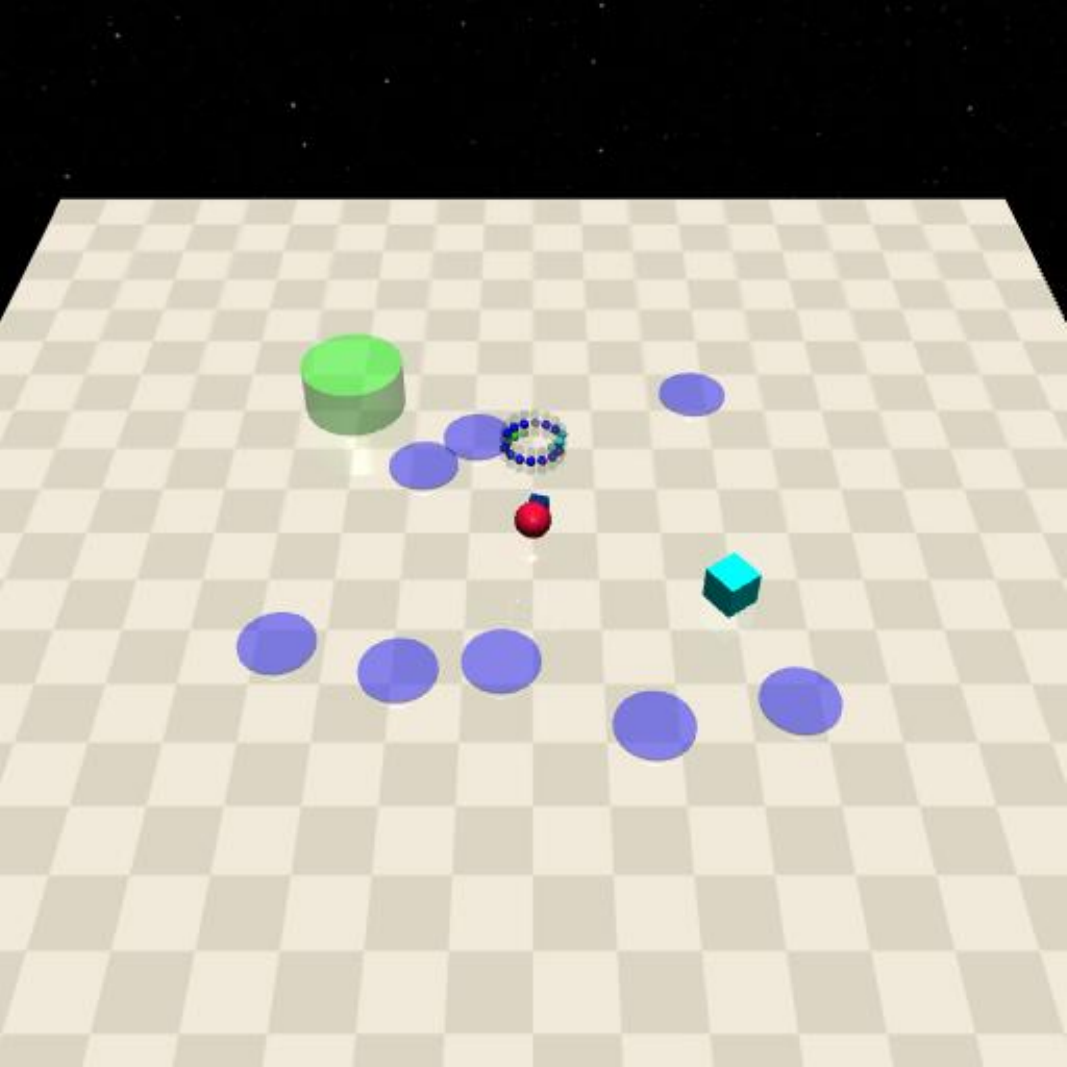} &
    \includegraphics[width=.25\linewidth]{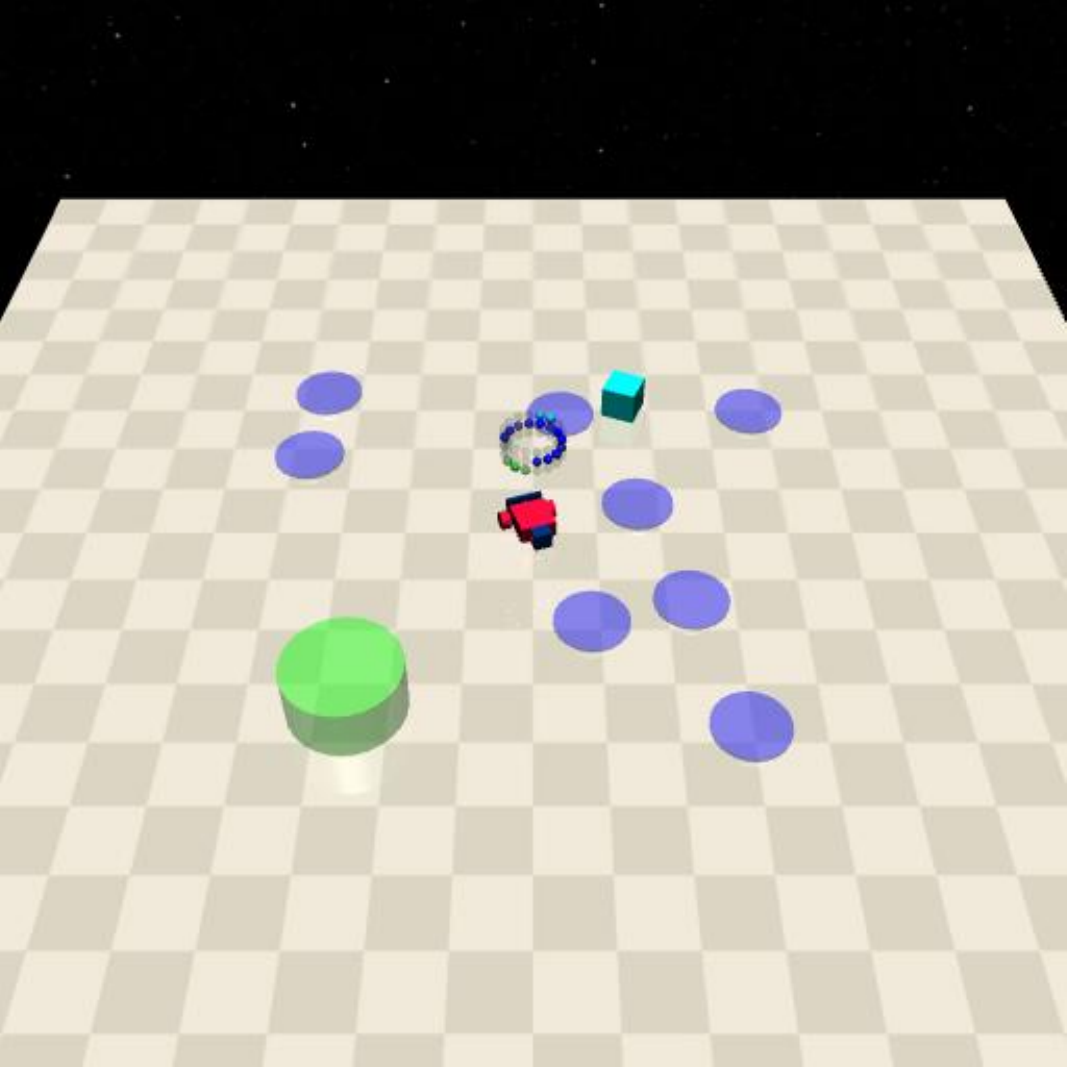} \\
  };

\path
  let \p2 = (m-1-3.south east) in coordinate (topRight) at (\x2,\y2);

\path
  let \p1 = (m-1-1.north west) in coordinate (topLeft) at (\x1,\y1);
\path
  let \p3 = (m-2-1.north west) in coordinate (botLeft) at (\x3,\y3);

\path
  let \p4 = (m-2-3.south east) in coordinate (botRight) at (\x4,\y4);

\begin{pgfonlayer}{background}
  \node (topbg) [fit=(topLeft) (topRight),
                 fill=ao!12, inner sep=0, rounded corners=2pt] {};
  \node (botbg) [fit=(botLeft) (botRight),
                 fill=bittersweet!12,  inner sep=0, rounded corners=2pt] {};
\end{pgfonlayer}

  \coordinate (YL) at ($(topbg.south)!0.5!(botbg.north)$);

  \coordinate (TopMidY) at ($(topbg.north)!0.5!(topbg.south)$);
  \coordinate (BotMidY) at ($(botbg.north)!0.5!(botbg.south)$);

\node[anchor=north, inner sep=0pt,
      font=\bfseries\fontsize{9}{9.6}\selectfont, text=ao!60!black]
     at ([yshift=-2mm]topbg.south)
     {(a) Training Environments};
\node[anchor=north, inner sep=0pt,
      font=\bfseries\fontsize{9}{9.6}\selectfont, text=bittersweet!60!black]
     at ([yshift=-2mm]botbg.south)
     {(b) Test Environments};
\tikzset{colname/.style={font=\bfseries\small, inner sep=0pt}}

\coordinate (C1) at (m-1-1.center);
\coordinate (C2) at (m-1-2.center);
\coordinate (C3) at (m-1-3.center);

\node[colname, anchor=south] at ([yshift=1mm] topbg.north -| C1) {SafeDarkRoom};
\node[colname, anchor=south] at ([yshift=1mm] topbg.north -| C2) {SafeDarkMujoco-Point};
\node[colname, anchor=south] at ([yshift=1mm] topbg.north -| C3) {SafeDarkMujoco-Car};

\end{tikzpicture}
\caption{\label{fig: env} Visualization of the environment layouts, directly adapted from \citet{moeini2026scared}. The agent (red) must navigate to the target location (green) while avoiding obstacles (blue shades). To evaluate out-of-distribution generalization, training configurations sample goals and obstacles using a center-oriented distribution ($\alpha$=0.5), whereas evaluation configurations strictly employ edge-oriented placements.}
\end{figure*}
\paragraph{Out-of-Distribution Environment Generation.}
For SafeDarkRoom and SafeDarkMujoco, we adopt the distance-based spawning protocol used in the safe ICRL benchmark of \citet{moeini2026scared}.
This benchmark creates a controlled distributional shift by sampling training and evaluation environments from spatial distributions with opposite concentration patterns.

During training, obstacle and goal locations are sampled from a center-concentrated distribution,
\[
p_{\text{train}}(x) \propto \exp(-\alpha d(x,c)),
\]
where $d(x,c)$ denotes Euclidean distance from the map center $c$.
Evaluation environments invert this distribution,
\[
p_{\text{test}}(x) \propto \exp(\alpha d(x,c)),
\]
forcing the agent to extrapolate into regions that are rarely observed during training.
As discussed by \citet{moeini2026scared}, increasing $\alpha$ sharpens the separation between training and test distributions and makes the resulting shift increasingly severe.
We refer readers to that work for the full benchmark construction and derivations.
For convenience, Figure~\ref{fig: env} reproduces representative training and evaluation layouts.

\paragraph{Measuring OOD Generalization.}
Goal-discovery environments such as DarkRoom are widely used in prior ICRL work \citep{laskin2023incontext, zisman2023emergence, son2025distilling}.
In those settings, test-time generalization is typically evaluated by holding out goal locations during training.
However, such held-out tasks may still be solvable by interpolation across nearby training tasks \citep{kirkSurveyZeroshotGeneralisation2023}.

The structural OOD setup used here is stricter.
Rather than withholding individual goals, we change the spatial distribution from which both goals and obstacles are sampled.
During training, the agent starts near the map center and encounters center-oriented tasks more often.
During evaluation, both goals and obstacles are drawn from edge-oriented distributions.
This shift applies to both SafeDarkRoom and SafeDarkMujoco and is designed to test extrapolative generalization to regions that are systematically underrepresented during training.

\paragraph{Safe ICRL Benchmarks.}
SafeDarkRoom is a partially observed grid-world adaptation of DarkRoom \citep{laskin2023incontext}.
The agent observes only its own position and cannot directly observe either the goal or the obstacles.
Rewards are sparse, and costs arise from stepping on hazards distributed throughout the map.
The agent must therefore infer both where to go and where not to go from the reward and cost signals accumulated in context.

SafeDarkMujoco is the continuous-control counterpart built on MuJoCo \citep{todorov2012mujoco}.
The robot observes internal physical states such as position, velocity, acceleration, and rotation angle, but does not receive direct sensing of the goal or obstacles.
As in SafeDarkRoom, successful adaptation requires using sparse reward and cost observations from previous interaction to infer the hidden task structure.

SafeVelocity follows the Safety Gym style of safe locomotion tasks \citep{ji2023safety}, but evaluation is performed on held-out target velocities from the same task family.
We treat this benchmark as unseen ID generalization rather than structural OOD generalization because the deployment tasks are new parameter settings within the same family, not tasks sampled from a qualitatively shifted spatial distribution.

\section{Training Details}
\label{app:training_details}
\paragraph{SCARED.} We use the original implementation of~\citet{moeini2026scared}. During SCARED pretraining, we resample a new environment from the training distribution every $K$ episodes.
For each sampled environment, we also sample a CTG target.
The CTG range is $[1,15]$ for SafeDarkRoom and $[10,50]$ for SafeDarkMujoco.

SCARED follows the AMAGO-style history-conditioned architecture of \citet{grigsby2023amago}.
An MLP time-step encoder maps each tuple $(S_t, A_t, R_t, C_t)$ to an embedding, and a transformer-based trajectory encoder processes the resulting sequence. The prediction head outputs either an action distribution in discrete-action environments or a value prediction in continuous-action environments.
Table~\ref{tab:scared_params} lists the remaining hyperparameters.

\paragraph{Q-Barrier.}
Shield-supporting modules, including the encoder $E_\phi$, projection heads $g_\omega^{\mathrm{world}}$ and $g_\psi^{\mathrm{policy}}$, latent dynamics model $p_z$, and ensemble cost critic $\{\hat Q_{C,i}\}_{i=1}^M$, train jointly with SCARED during pretraining, without additional epochs.
In discrete-action environments, the shield evaluates the full action space during training to learn the cost critic and barrier signal.
In continuous-control environments, candidate sampling is deferred to deployment: the latent dynamics, cost critic, and projection heads are trained without candidate sampling, which avoids repeated per-update candidate generation.
At test time, the frozen shield samples $N_s$ actions from the base policy and applies the reweighting rule in Algorithm~\ref{alg:runtime_shielding}.
The loss weights in Equation~\eqref{eqn:q-barrier loss} are fixed across all environments:
$\lambda_{\mathrm{critic}}=10.0$,
$\lambda_{\text{wm}}=1.0$,
$\lambda_{\text{dist}}=0.1$,
$\lambda_{\text{conj}}=0.1$. Every other parameters exactly follow Table~\ref{tab:scared_params}.

\begin{table*}[t]
\centering
\begin{tabular}{|l|l|l|}
\hline
\textbf{Parameter} & \textbf{SafeDarkRoom} & \textbf{SafeDarkMujoco \& SafeVelocity} \\
\hline
Episode time limit $t_{\max}$ & 30 & 75 \\
\hline
Replay buffer capacity & 100,000 & 100,000 \\
\hline
Embedding dim & 64 & 64 \\
\hline
Hidden dim & 64 & 64 \\
\hline
Num layers & 4 & 4 \\
\hline
Num heads & 8 & 8 \\
\hline
Seq len & 1500 & 1500 \\
\hline
Attention dropout & 0 & 0 \\
\hline
Residual dropout & 0 & 0 \\
\hline
Embedding dropout & 5 & 5 \\
\hline
Learning rate & $3\times 10^{-4}$ & $3\times 10^{-4}$ \\
\hline
Betas & $(0.9, 0.99)$ & $(0.9, 0.99)$ \\
\hline
Clip grad & 1.0 & 1.0 \\
\hline
Batch size & 32 & 32 \\
\hline
Optimizer & Adam & Adam \\
\hline
\end{tabular}
\caption{Hyperparameters for SCARED.}
\label{tab:scared_params}
\end{table*}

\paragraph{SafeMeta and MAML with penalty.}
We use the original implementations of \citet{xu2025efficient} with three hidden layers of sizes $(64, 512, 64)$ for the policy, value, and cost networks, for a total of 205,510 parameters.
We train for 15k meta-training iterations, sampling 20 tasks from the training distribution at each iteration.
During meta-testing, both algorithms perform $K$ gradient updates to adapt to new tasks with unseen cost-to-go values.
Although some meta-RL methods use frame stacking or recurrent architectures, their hidden states reset between episodes; they therefore remain episodic and do not carry information across episode boundaries in the way ICRL methods do.

\begin{table*}[t]
\centering
\begin{tabular}{|l|l|l|}
\hline
\textbf{Parameter} & \textbf{SafeDarkRoom} & \textbf{SafeDarkMujoco \& SafeVelocity} \\
\hline
Episode time limit $t_{\max}$ & 30 & 75 \\
\hline
Hidden layers & $(64, 512, 64)$ & $(64, 512, 64)$ \\
\hline
Min/max batch size & 500/1500 & 1500/1500 \\
\hline
Policy learning rate & $10^{-3}$ & $10^{-3}$ \\
\hline
Value/cost learning rate & $3\times 10^{-2} / 10^{-1}$ & $3\times 10^{-2} / 10^{-1}$ \\
\hline
Discount factor $\gamma$ & 0.99 & 0.99 \\
\hline
GAE parameter $\tau$ & 0.95 & 0.95 \\
\hline
Max KL divergence & $10^{-3}$ & $10^{-3}$ \\
\hline
Lagrangian weight $\lambda$ & 1.0 & 1.0 \\
\hline
Num meta-training iterations & 15k & 15k \\
\hline
Policy optimizer & Adam & Adam \\
\hline
Value/cost optimizer & L-BFGS & L-BFGS \\
\hline
\end{tabular}
\caption{Hyperparameters for SafeMeta and MAML with penalty.}
\label{tab:safemeta_maml_parameters}
\end{table*}

\paragraph{Safe Algorithm Distillation.}
We use the original implementation of~\citet{moeini2026scared}. For Safe AD, we collect a dataset $\mathcal{D}=\{\Xi_i\}$ of learning histories, where each trajectory $\Xi_i \doteq (\tau_1,\tau_2,\dots,\tau_K)$ is a sequence of episodes generated by running safe RL algorithms on CMDPs.
Each episode $\tau_k$ contains states, actions, rewards, and costs.

We train the Safe AD policy autoregressively to distill the behavior present in these logged learning histories, following \citet{laskin2023incontext}.
Each training example contains a trajectory segment together with RTG and CTG conditioning.
The policy takes as input $(S_t^k, H_t^k, G_t(\tau_k), G_{c,t}(\tau_k))$ and predicts the action at time $t$ in episode $k$.

\textbf{Discrete action spaces.}
For environments with discrete actions, the model outputs categorical logits and is trained with cross-entropy:
\begin{align}
\mathcal{L}_{\mathrm{disc}}(\theta)
&= \mathbb{E}_{\Xi_i \sim \mathcal{D}}
\left[
-\log \pi_\theta\!\left(A_t^k \mid S_t^k, H_t^k, G_t(\tau_k), G_{c,t}(\tau_k)\right)
\right].
\label{eq:cad_ce}
\end{align}

\textbf{Continuous action spaces.}
For environments with continuous actions, the model predicts an action mean and is trained with an $\ell_2$ regression loss:
\begin{align}
\mathcal{L}_{\mathrm{cont}}(\theta)
&= \mathbb{E}_{\Xi_i \sim \mathcal{D}}
\left[
\left\| A_t^k - \mu_\theta\!\left(S_t^k, H_t^k, G_t(\tau_k), G_{c,t}(\tau_k)\right) \right\|_2^2
\right].
\label{eq:cad_l2}
\end{align}
These objectives let the transformer absorb constraint-aware goal-seeking behavior into its forward pass.

\textbf{Dataset collection.}
We follow the same training protocol from~\citep{moeini2026scared}. We collect Safe AD data by running PPO-Lagrangian \citep{schulman2017proximal, ray2019benchmarking} on the training environments.
For SafeDarkRoom, we vary the cost limit across $\{0, 2.5, 5.0\}$ and collect 50,000 steps of learning history for each setting.
For SafeDarkMujoco and SafeVelocity, we use larger datasets of 2--5 million environment steps to capture a broader range of safety-relevant behavior in continuous control.
In these continuous-control environments, we keep the cost limit fixed at $0$ during data collection so that the logged histories cover the full trajectory from high-cost exploration to near-feasible behavior later in training.
This larger and more behaviorally diverse dataset is necessary because conditioning jointly on RTG and CTG induces a large space of reward-cost tradeoffs that the model must represent.


\begin{table*}[t]
\centering
\begin{tabular}{|l|l|l|}
\hline
\textbf{Parameter} & \textbf{SafeDarkRoom} & \textbf{SafeDarkMujoco \& SafeVelocity} \\
\hline
Embedding dim & 64 & 512 \\
\hline
Hidden dim & 512 & 256 \\
\hline
Num layers & 8 & 8 \\
\hline
Num heads & 8 & 8 \\
\hline
Seq len & 100 & 200 \\
\hline
Attention dropout & 0.5 & 0.5 \\
\hline
Residual dropout & 0.1 & 0.1 \\
\hline
Embedding dropout & 0.3 & 0.3 \\
\hline
Learning rate & $3\times 10^{-4}$ & $3\times 10^{-4}$ \\
\hline
Betas & $(0.9, 0.99)$ & $(0.9, 0.99)$ \\
\hline
Clip grad & 1.0 & 1.0 \\
\hline
Batch size & 512 & 128 \\
\hline
Optimizer & Adam & Adam \\
\hline
\end{tabular}
\caption{Hyperparameters for Safe Algorithm Distillation.}
\label{tab:safe_ad_params}
\end{table*}

\paragraph{Compute resources.}
All experiments were run on NVIDIA A100 (80GB) or H100 (80GB) GPUs. SCARED pretraining uses 3K epochs for SafeDarkRoom, and SafeDarkMujoco, and 2K for SafeVelocity, with the shield-supporting modules (encoder, projection heads, latent dynamics, ensemble cost critic) trained jointly during the same pretraining phase. Safe AD data collection via PPO-Lagrangian requires 2--5 million environment steps per environment. Safe meta-RL baselines (SafeMeta, MAML with penalty) train for up to 15K meta-training iterations. 

\end{document}